\DeclareMathOperator*{\argmax}{arg\,max}
\newcommand{\BX}{\mathbf{X}}
\newcommand{\BZ}{\mathbf{Z}}
\newcommand{\real}{\mathbb{R}}
\newtheorem{theorem}{Theorem}[section]
\newtheorem{remark}{Remark}%
\newcommand{\Hquad}{\hspace{0.5em}} 
\providecommand{\keywords}[1]
{
  \textbf{Keywords---} #1
}
\title{Node Copying: A Random Graph Model for Effective Graph Sampling}
\author{Florence~Regol$^{1\dagger}$, Soumyasundar~Pal$^{1\dagger}$, Jianing~Sun$^{2}$, Yingxue~Zhang$^{2}$, \\ Yanhui~Geng$^{2}$, and~Mark~Coates$^{1}$ \\ \\
$^{1}$Dept. of Electrical and Computer Engineering, McGill University, Montreal, QC, Canada. \\  
$^{2}$Huawei Noah’s Ark Lab, Montreal Research Center, Montreal, QC, Canada.}
\date{}
\begin{document}
\maketitle

\begin{abstract}
\normalsize
There has been an increased interest in applying machine learning
techniques on relational structured-data based on an observed graph. Often, this graph is not fully representative of the true relationship amongst
nodes. In these settings, building a generative model conditioned on the observed
graph allows to take the graph uncertainty into account. Various existing
techniques either rely on restrictive assumptions, fail to preserve topological
properties within the samples or are prohibitively expensive for larger graphs.
In this work, we introduce the node copying model for constructing a distribution
over graphs. Sampling of a random graph is carried out by replacing each
node’s neighbors by those of a randomly sampled similar node. The sampled
graphs preserve key characteristics of the graph structure without explicitly
targeting them. Additionally, sampling from this model is extremely
simple and scales linearly with the nodes. We show the usefulness
of the copying model in three tasks. First, in node classification, a
Bayesian formulation based on node copying achieves higher accuracy in sparse
data settings. Second, we employ our proposed model to mitigate the effect
of adversarial attacks on the graph topology. Last, incorporation of the model in
a recommendation system setting improves recall over {\em state-of-the-art} methods.
\end{abstract}

\noindent \keywords{Generative graph model, graph neural network, adversarial attack, recommender systems }

\section{Introduction}
{\let\thefootnote\relax\footnotetext{$^\dagger$These authors contributed equally to this work.}}
{\let\thefootnote\relax\footnotetext{Corresponding author: Florence Regol (email address: \href{mailto:florence.robert-regol@mail.mcgill.ca}{florence.robert-regol@mail.mcgill.ca})}}
In graph related learning problems, models need to take into account relational structure. This additional information is encoded by a graph that represents data points as nodes and the relationships as edges. In practice, observed quantities are often noisy and the information used to construct a graph is no exception. The provided graph is likely to be incomplete and/or contain spurious edges. For some graph learning tasks, this error or incompleteness is explicitly stated. For example, in the recommendation system setting, the task is to infer unobserved links representing users' preferences for items. In other cases, such as protein-protein interaction networks, graph uncertainty is implicit, arising because the graphs are constructed from noisy measurement data.

In these cases, we can view the graph as a random quantity. The modelling of random graphs as realisations of statistical models has been widely studied in network analysis~\cite{Goldenberg2010}. Parametric random graph models have been used extensively to study complex systems and perform
inference~\cite{albert2002, erdos59, airoldi2009}. Parameters are usually estimated based on the observed graph. However, these models often fail to capture the structural properties of real-world networks and as a result, are not suitable as generative models. In addition, inference of parameters can be prohibitively expensive for large scale networks, and many parametric models cannot take into account node or edge attributes.

Recent interest in generative models for graph learning has resulted in auto-encoder based formulations~\cite{kipf2016,grover2016}. There has also been an effort to combine the strengths of parametric random graph models and machine learning approaches~\cite{metha2019}. In these models, the probability of the presence of an edge is related to the node embeddings of the incident nodes. Although the auto-encoder solutions excel in their targeted objective of link prediction, they fail to generate representative samples~\cite{bojchevski2018}.
In~\cite{bojchevski2018, wang2017}, application of generative
adversarial networks (GANs) is considered
for graph generative models. \cite{bojchevski2018} shows
that the GAN generated graphs do not deviate from the observed graph in terms of structural properties. These approaches are promising but the
required training time can be prohibitive for graphs of even a moderate size.

The main contribution of this paper is to introduce a novel generative
model for graphs for the case when we have access to a single observed graph. The generative model is designed to allow the sampling of multiple graphs that can be used as an ensemble to improve performance in graph-based learning tasks. We call the approach ``node copying". Based on a similarity metric between nodes, the node copying model generates 
random graph realizations by replacing the edges of a node in the observed graph with those
of a randomly chosen similar node. 
The assumption made by this model is that {\em similar} nodes should have {\em similar} neighborhoods. The meaning of `similar' varies depending on how the model is employed in a learning task and has to be defined according to the application setting. Most graph learning algorithms rely on and exploit the property of homophily, which implies that nodes with similar characteristics are more likely to be connected to one another~\cite{zhu2020}. The proposed node copying model preserves this homophily (one similar neighbourhood is swapped for another similar neighbourhood). The advantage is that the generative model permits sampling of an ensemble of graphs that can be used to improve the performance of learning algorithms, as illustrated in later sections of the paper.

By construction, each sampled graph possesses the same set of nodes as the observed graph and the identities of the nodes are retained throughout the sampling procedure. The identity of a node is specified by a label and/or a set of node features; hence, each node preserves its own features or labels (if available) in the sampled graphs. The proposed model is simple
and the computational overhead is minimal. Despite its simplicity, we show that incorporating such a model can be beneficial in a range of graph-based learning tasks. The model is flexible in the sense that the choice of the similarity
metric is adapted to the end-to-end learning
goal. It can be combined with {\em state-of-the-art} methods to provide an
improvement in performance. In summary, the proposed
model: 1) \textit{generates useful samples} which improve the
performance in graph-based machine learning tasks and retain
important structural properties; 2) \textit{is flexible} so that it is
applicable to various tasks (e.g., the bipartite graphs in
recommendation systems are drastically different from the community
structured graphs considered in node classification); and 3)
\textit{is fast} both for model construction and random sample generation.

Extensive experiments demonstrate the effectiveness of our model for
three different graph learning tasks. We show that the incorporation
of the model in a Bayesian framework for node classification leads to
improved performance over {\em state-of-the art} methods for the scarce data
scenario. The model has much lower computational complexity compared
to other Bayesian alternatives. In the face of adversarial attacks, a
node copying based strategy significantly mitigates the effect of the
corruption. Lastly, the use of the node copying model in a personalized
recommender system leads to improvement over a {\em state-of-the-art} graph-based method. Preliminary results for application of the node copying model in semi-supervised node classification and in defense against adversarial attacks were published in~\cite{pal2019} and~\cite{regol2019} respectively. In this paper, we add theoretical results to shed light on the statistical properties of the sampled graphs in special cases, conduct thorough experimentation to examine the similarity of the sampled graphs to the observed graph, show the effectiveness of the proposed approach in a data-scarce classification problem and against diverse adversarial attacks, and extend the application of the copying model to the recommendation setting.

\section{Related work}

\textbf{Parametric random graph models}: There is a rich history of
research and development of parametric models of random graphs. These models have been designed to generate graphs exhibiting specific
characteristics and their theoretical properties have been studied in depth. They can yield samples representative of an observed
graph provided that the model is capable of representing the
particular graph structure and parameter inference is successful. The
Barabasi-Albert model~\cite{albert2002}, exponential random
graphs~\cite{erdos59}, exchangeable random graph
models~\cite{caron2017}, and the large family of stochastic block
models (SBMs)~\cite{nowicki2001}, including mixed-membership
SBMs~\cite{airoldi2009}, degree-corrected SBMs~\cite{peng2016}, and
overlapping SBMs~\cite{latouche2009, kurt2009}), fall into this category. Configuration model~\cite{fosdick2018, casiraghi2018} variants preserve the degree sequence of a graph while changing the connectivity pattern in the random samples. \cite{drobyshevskiy2019} provides a recent survey of various random graph models. A major drawback of these models is that they impose relatively strict
structural assumptions in order to maintain tractability. As a result, they often cannot model characteristics like large clustering coefficients~\cite{bringmann2019}, small world connectivity and exponential degree distributions~\cite{veitch2015}, observed in many real-world networks. Additionally, most cannot readily take into account node or edge attribute information, and high-dimensional parameter inference can  be prohibitively computationally expensive for larger graphs. 

\noindent\textbf{Learning-based models}: Other generative graph models have
emerged from the machine learning community, incorporating an
auto-encoder structure. The models are commonly trained to accurately
predict the links in the graph, and as a result, tend to fail to
reproduce global structural properties of the observed
graph. \cite{kipf2016} introduces a variational auto-encoder based
model parameterized by a graph convolutional network (GCN) to learn
node embeddings. The link probabilities are derived from the dot product of the obtained node embeddings. \cite{pan2018adversarially} adopts a similar approach, but employs adversarial regularization to learn more robust embeddings. Both of these models exhibit impressive clustering of node embeddings. \cite{grover2019} adds a message passing component to the decoder that is based on intermediately learned graphs. This leads to improved representations and better link prediction.  \cite{metha2019} combines the strengths of the parametric models and the graph-based learning methods, proposing the DGLFRM model, which aims to retain the interpretability of the overlapping SBM (OSBM) paired with the flexibility of the graph auto-encoder. The incorporation of the OSBM improves the ability of the model to capture block-based community structure.

An alternative approach is to use generative adversarial networks (GANs) as the basis for graph models. \cite{wang2017} models edge probability through an adversarial framework in the GraphGAN model. The NetGAN model represents the graph as a distribution on random walks~\cite{bojchevski2018}. Compared to auto-encoder based methods, the GAN based methods seem more capable of capturing the structural characteristics of an observed graph. The major disadvantage is that the models are extremely computationally demanding to train and the success of the training can be sensitive to the choice of hyperparameters.

Our focus in this paper is on learning a graph model based on a single observed graph. By contrast, there is a growing body of work that focuses on learning graph models that can reproduce graphs that have characteristics similar to a dataset of multiple training graphs. These approaches can preserve important structural attributes of the graph(s) in the dataset, but the sampled graphs do not retain node identity information. They cannot be applied in the node- and edge-oriented learning tasks we focus on. In this category, there have been variational auto-encoder approaches~\cite{simonovsky2018, ma2018rgvae}, GAN-based approaches~\cite{de2018}, models based on iterative generation~\cite{li2018b}, and auto-regressive models~\cite{you2018,liao2019}.
\section{Node copying model}
We propose to build a random graph model by introducing random perturbations to an initial observed graph $\mathcal{G}_{obs}$. Our aim is to generate graphs $\mathcal{G}$ that are close to $\mathcal{G}_{obs}$ in some structural sense, while preserving any metadata associated with a node's identity (e.g., node features, labels). Here, we consider that $\mathcal{G}_{obs} = \{\mathcal{V}_{obs}, \mathcal{E}_{obs}\}$ is a directed graph (the extension to the undirected case is straightforward and will be explained at the end of this section). $\mathcal{V}_{obs}$ denotes the set of $N$ nodes and $\mathcal{E}_{obs}$ is the set of directed edges of the form $(i, j, A_{\mathcal{G}_{obs}, ij})$ which indicates that there is a directed edge from node $i$ to node $j$ with edge weight encoded in the adjacency matrix $A_{\mathcal{G}_{obs}}  \in \real_+^{N \times N} $.  For the sampled graph $\mathcal{G}= \{\mathcal{V}, \mathcal{E}\}$, we have the same set of vertices as  $\mathcal{G}_{obs}$ i.e. $\mathcal{V} = \mathcal{V}_{obs}$ but the connectivity pattern is different i.e. $\mathcal{E} \neq \mathcal{E}_{obs}$.

To that end, we introduce a discrete perturbation random vector  $\boldsymbol{\zeta} \in S_{\boldsymbol{\zeta}}^N$, whose entries can have values from a finite or countably infinite set $S_{\boldsymbol{\zeta}}$ and define a mapping $T : \real_+^{N \times N} \times S_{\boldsymbol{\zeta}}^N \to \real_+^{N \times N}  $ whose output is an adjacency matrix $A_{\mathcal{G}} \in \real_+^{N \times N}$ of the same dimension as the observed adjacency matrix $A_{\mathcal{G}_{obs}}\in \real_+^{N \times N} $, based on the inputs $A_{\mathcal{G}_{obs}} $ and  $\boldsymbol{\zeta}$. We require $\mathcal{G}$ to have the same set of nodes as $\mathcal{G}_{obs}$; hence, $\mathcal{G}$ is fully characterized by $A_{\mathcal{G}}$.

The mapping is not necessarily one-to-one, i.e., multiple $\boldsymbol{\zeta}$s can generate the same graph $\mathcal{G}$. The probability of generating a specific graph $\mathcal{G}$ is then specified by defining a probability distribution for $\boldsymbol{\zeta}$. We model the distribution of  $\boldsymbol{\zeta}$ to be conditioned on $\mathcal{G}_{obs}$ and possible additional information $\mathcal{D}$ (e.g. node/edge labels or features). We therefore obtain the following conditional probability:
\begin{align}
p(\mathcal{G}|\mathcal{G}_{obs},\mathcal{D}) = \sum_{\boldsymbol{\zeta}} p(\boldsymbol{\zeta}|\mathcal{G}_{obs},\mathcal{D}) \Hquad \mathbbm{1}[T(A_{\mathcal{G}_{obs}},\boldsymbol{\zeta})=A_{\mathcal{G}}]\,, \label{eqn:model}
\end{align}
where $\mathbbm{1}[\cdot]$ denotes the indicator function, which takes the value one if $T(A_{\mathcal{G}_{obs}},\boldsymbol{\zeta})=A_{\mathcal{G}}$ and is zero otherwise. This is the foundation of our proposed copying model. 
We propose a `node copying' mechanism to implement the mapping $T$. The intuition for the node copying mechanism is as follows.  Suppose two nodes, $i$ and $j$, are `similar', where `similarity' depends on the context or application setting. It may be defined in terms of node attributes, node labels, structural properties, or a combination of all of these. Then we conjecture that similar nodes also have similar neighbourhoods. For example, suppose both nodes have the same label. Then, in a homophilic graph, most of their neighbours also have the same label, and the neighbourhoods are thus similar. If we replace node $i$'s neighbourhood with that of node $j$ (the copy operation), then we obtain a new graph, but the homophilic nature of the graph is preserved.

To make our random graph model concrete, we need to specify 1) the mapping $T$ with the nature of $\boldsymbol{\zeta}$ and 2) the conditional distribution $p(\boldsymbol{\zeta}|\mathcal{G}_{obs},\mathcal{D})$.

\begin{wrapfigure}{l}{0.415\textwidth}
\centering
\includegraphics[scale = 0.1]{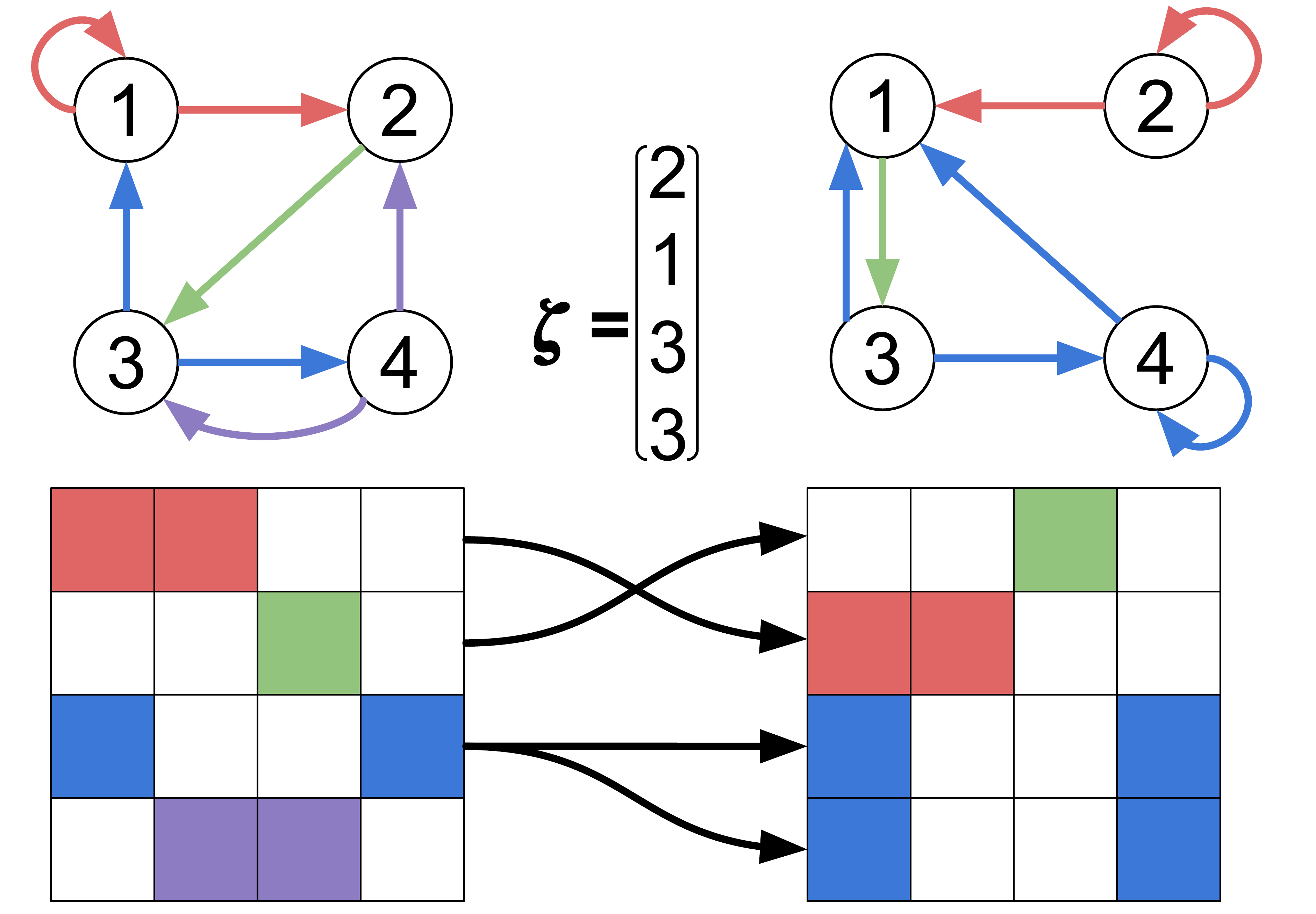}
\caption{Application of the node copying $T$ on observed graph $\mathcal{G}_{obs}$ (left-side) for a given $\boldsymbol{\zeta}$ to obtain $\mathcal{G}$ (right-side).}
\label{fig:copying}
\end{wrapfigure}
\emph{1) Mapping $T(\cdot,\cdot)$ and random vector $\boldsymbol{\zeta}$:} We construct a \noindent \textbf{node copying mapping} $T$ that replaces the neighborhood of a node $i$  by that of another node given by the $i$-th entry of the vector $\boldsymbol{\zeta} = [\zeta^1, \zeta^2, ... \zeta^N]^\top \in \{1,2,... N\}^N$. $\zeta^i$ is referred to as the \textit{replacement node} of node $i$.  An explicit expression for $T$ can be provided using a selection matrix $C_{\boldsymbol{\zeta}} \in \{0,1\}^{N \times N}$, where $C_{\boldsymbol{\zeta}}[i,j] = 1$ if $ j = \zeta^i$ and 0 otherwise. Then $T(A_{\mathcal{G}_{obs}},\boldsymbol{\zeta}) = C_{\boldsymbol{\zeta}}A_{\mathcal{G}_{obs}} =  A_{\mathcal{G}} $. Figure~\ref{fig:copying} depicts the application of the mapping $T$ on example $\mathcal{G}_{obs}$ and $\boldsymbol{\zeta}$ to obtain a graph $\mathcal{G}$.  

\emph{2) Distribution $p(\boldsymbol{\zeta}|\mathcal{G}_{obs},\mathcal{D})$:}
The distribution for $\boldsymbol{\zeta}$ should encode the notion of node similarity, and is therefore task dependant. The entries in $\boldsymbol{\zeta}$ are assumed to be mutually independent
to facilitate inference: $p(\boldsymbol{\zeta}|\mathcal{G}_{obs},\mathcal{D}) = \prod_{j=1}^{N}p(\zeta^j|\mathcal{G}_{obs}, \mathcal{D})$.
We provide examples of candidate probability distributions in later sections. One approach is to train a node embedding algorithm to learn $\{\mathbf{e}_i\}_{i=1:N} = f_{\mathbf{\Theta}}(\mathcal{G}_{obs},\mathcal{D})$, where $\mathbf{e}_i$ denotes the representation of node $i$. The probability $p(\zeta^j = i|\mathcal{G}_{obs}, \mathcal{D}) \propto \text{sim} (\mathbf{e}_j,\mathbf{e}_i)$ to promote frequent copying between nodes if they are `similar', for some function $\text{sim}(\cdot, \cdot)$ , which measures similarity between two nodes.


This completes the description of our proposed model. Sampling a graph $\mathcal{G}$ from the node copying model involves sampling each element of $\boldsymbol{\zeta}$ independently according to $p(\zeta^j|\mathcal{G}_{obs}, \mathcal{D})$. Once $\boldsymbol{\zeta}$ is constructed, we replace the out-edges of the $i$-th node in the observed graph $\mathcal{G}_{obs}$ by those of the $\zeta^i$-th node. In general, we construct the distribution $p(\zeta^j|\mathcal{G}_{obs}, \mathcal{D})$ so that it is simple to sample from.  We thus avoid any need to use MCMC or importance sampling for drawing the sampled graphs. For using this model to sample undirected graphs when $\mathcal{G}_{obs}$ is undirected, we apply the following procedure. We treat $\mathcal{G}_{obs}$ as a directed graph and use the node copying model to sample a directed graph $\mathcal{G}'$. Then, we change each directed edge to an undirected edge to obtain an directed graph $\mathcal{G}$ by setting $A_{\mathcal{G}, ij} = \text{max} (A_{\mathcal{G}', ij}, A_{\mathcal{G}', ji})$.

\begin{remark}
Since the model is defined in terms of perturbations of the observed graph $\mathcal{G}_{obs}$, the utility of the proposed graph model hinges on $\mathcal{G}_{obs}$ being a meaningful starting point for inference. In some cases, even if no graph is directly available for a problem, we can learn relationships between entities and construct $\mathcal{G}_{obs}$ from the available data. Computationally efficient techniques for such graph construction are reported in~\cite{dong2016, kalofolias2017}. In some cases, the computational requirements can still be considerable.  
\end{remark}

\section{Analysis and comparison of sampled graphs}\label{sec:analysis}
In order to gain insight into the properties of the random graphs
drawn from the node copying model, we evaluate several properties of the
samples and compare with other generative methods and
the observed graph, for four datasets.\\
\noindent \textbf{Node Embedding Baselines}: We compare with various graph-based auto-encoder models, which are
trained to optimize an unsupervised variational objective for a link prediction task. Specifically we compare with the
Variational Graph Auto-Encoder (VGAE)~\cite{kipf2016}, GRAPHITE~\cite{grover2019} and
DGLFRM-B~\cite{metha2019} algorithms. In each algorithm, the
probability of an edge between any two nodes depends on the learned embeddings. The probability of an edge between nodes $i$ and $j$ from a specific model is denoted as $p_{ij}^{model}$. Instead of their originally proposed use for link prediction, we wish to evaluate these models as generative models. Hence, we train the models using the entire graph and estimate $\{p_{ij}^{model}\}_{1 \leq i \leq j \leq N}$. We refer to these probabilities as `raw' probabilities and sample graphs according to them. In practice, this results in dense sampled graphs where the edge densities in the sampled graphs are multiple orders-of-magnitude higher than the density of the observed graph $\mathcal{G}_{obs}$. As a result, many of the statistics of the sampled graphs are not comparable with those of $\mathcal{G}_{obs}$. In order to sample more realistic graphs, we employ logistic regression based calibration. We fit a model
$p_{ij}^{cal} = p(A_{obs,ij}=1|p_{ij}^{model}) = \sigma(\alpha p_{ij}^{model} +
\beta)$ (where $\sigma(\cdot)$ denotes the sigmoid function) to learn $(\alpha, \beta)$ and use the predicted probabilities for sampling graphs. Even after calibration the densities of the sampled graphs are still too high, so we explicitly re-scale the calibrated probabilities such that the sampled graphs in expectation have the same number of edges $|\mathcal{E}_{obs}|$ as $\mathcal{G}_{obs}$: $p^{cc}_{ij} = \displaystyle{\frac{|\mathcal{E}_{obs}|p_{ij}^{cal}}{\sum_{i=1}^N\sum_{j=1}^N p_{ij}^{cal}}}$. We call this procedure `calibration and correction' (cc).

\noindent \textbf{Node copying}: Since the baselines are unsupervised
techniques, for a fair comparison we refrain from using node
labels for the node copying model. Instead, we sample replacement
nodes according to the distances between learned embeddings. The selected datasets are used as benchmark for node classification task and are expected to  exhibit some homophilic property, which suggests that nodes with the same label are clustered together and share similar features. This indicates that it is sensible to construct a similarity metric using node embeddings that are derived from both topological and feature information.
Specifically, for any node, we sample the replacement node uniformly
from the $K$-nearest neighbors according to the Euclidean distance between
embeddings. 

\begin{table}[ht]
\centering
\caption{Average statistics of 100 sampled graphs. Bolded entries indicate the closest values to the observed value. N/A indicates entries that were not reported due to processing limitations.}
\setlength{\tabcolsep}{3pt}
\resizebox{\columnwidth}{!}{
\begin{tabular}{lccccc|ccccc} \midrule[0.25ex]
&\textbf{\begin{tabular}[c]{@{}c@{}}Avg. \\ degree  \end{tabular}} &\textbf{\begin{tabular}[c]{@{}c@{}}Max.\\ degree\end{tabular}} &\textbf{\begin{tabular}[c]{@{}c@{}}Cross \\ com. $\%$\end{tabular}} &\textbf{\begin{tabular}[c]{@{}c@{}}Claws $\%$ \\ $ (\times 10^{-4})$ \end{tabular} }  &\textbf{\begin{tabular}[c]{@{}c@{}}Edge  dist.\\ ent. $ (\%)$\end{tabular}}&\textbf{\begin{tabular}[c]{@{}c@{}}Avg. \\ degree  \end{tabular}} &\textbf{\begin{tabular}[c]{@{}c@{}}Max.\\ degree\end{tabular}} &\textbf{\begin{tabular}[c]{@{}c@{}}Cross \\ com. $\%$\end{tabular}} &\textbf{\begin{tabular}[c]{@{}c@{}}Claws $\%$\\ $ (\times 10^{-4})$ \end{tabular} }  &\textbf{\begin{tabular}[c]{@{}c@{}}Edge  dist.\\ ent. $ (\%)$\end{tabular}}  \\ \midrule
&\multicolumn{5}{c}{\textbf{Cora}} &\multicolumn{5}{c}{\textbf{Polblogs}} \\ 
\midrule
\textbf{Observed} &3.89 &168 &19.6 &6.34 &95.6 &27.4 &351 &9.42 &10.1 &90.3\\ \hline
\textbf{GRAPHITE} (cc)&\textbf{3.9} &24 &34.4 &0.43 &\textbf{96} &\textbf{27.3} &64 &18.1 &0.91 &98.9\\
\textbf{raw} &$1.3\times10^3$ &$1.4\times10^3$ &77.6 &0.18 &100 &614 &712 &39.2 &0.67 &100 \\
\textbf{calibrated} &105 &519 &34.5 &0.44 &97.7 &256 &530 &18.1 &0.91 &99.2 \\ \hline
\textbf{GVAE} (cc)&\textbf{3.9} &22.1 &37 &0.302 &96.2 &\textbf{27.3} &64.1 &18.2 &0.918 &98.9 \\
\textbf{raw} &1.34$\times 10^3$ &1.44$\times 10^3$  &77.7 &0.136 &100 &614 &714 &39.4 &0.673 &100 \\
\textbf{calibrated} &125 &587 &37 &0.309 &97.9 &255 &533 &18.2 &0.918 &99.1 \\ \hline
\textbf{DGLFRM-B}  (cc)  &\textbf{3.9} &14.9 &38.8 &0.169 &97.8 &\textbf{27.3} &58.8 &18.1 &0.874 &99 \\
\hline \hline
\textbf{COPYING} K=5 &3.5 &\textbf{105 } &\textbf{ 18.4} &\textbf{3.2} &96.9 &26 &\textbf{311} &\textbf{8.94} &\textbf{8.61} &\textbf{90.6} \\
\textbf{COPYING} K=10 &3.3 &63.9  &20.4 &1.22 &97.4 &25.5 &290 &8.83 &7.72 &90.8 \\
\textbf{COPYING} K=15 &3.2 &58.1 &22 &1.1 &97.5 &25.2 &274 &8.72 &7.33 &90.9 \\
\hline   
&\multicolumn{5}{c}{\textbf{Bitcoins}} &\multicolumn{5}{c}{\textbf{Amazon-photo}} \\ \hline
\textbf{Observed} &2.08 &53 &27.1 &240 &92.3 &31.8 &$1.43\times10^3$ &17.3 &0.75 &94.2 \\\hline
\textbf{GRAPHITE} (cc) &\textbf{2.09} &9.21 &23.8 &6.22 &\textbf{94.7} &\textbf{31.8} &87.2 &81.3 &0.02 &99.3 \\
\textbf{raw} &250 &311 &37.6 &4.97 &99.8 &$4.4\times 10^3$ &$6.0\times 10^3$ &83.1 &0.02 &100 \\ 
\textbf{calibrated} &84.8 &219 &23.8 &9.48 &99.1 &$2.7\times10^3$ &$6.4\times10^3$ &81.3 &0.02 &99.4\\ \hline

\textbf{GVAE} (cc)&\textbf{2.09} &9.18 &22.2 &6.24 &94.8 &\textbf{31.8} &71.7 &80.2 &0.0212 &99.5 \\
\textbf{raw} &236 &285 &37.1 &4.62 &99.9 &5.41$\times 10^3$ &6.24$\times 10^3$ &82.4 &0.0226 &100 \\
\textbf{calibrated} &72.4 &198 &22.2 &6.96 &99.1 &2.95$\times 10^3$ &4.84$\times 10^3$ &80.2 &0.0228  &99.7 \\ \hline
\textbf{DGLFRM-B} (cc)&235 &291 &36.9 &4.59 &99.9 &N/A &N/A &N/A &N/A &N/A\\
\hline \hline
\textbf{COPYING }K=5 &1.8 &\textbf{34.9} &\textbf{23.6} &\textbf{125} &95.3 &31.5 &$1.04\times10^3$ &\textbf{41.5} &\textbf{0.532} &\textbf{94.7} \\

\textbf{COPYING} K=10 &1.73 &28.3 &23 &89.9 &96.0 &31.1 &$\mathbf{1.05 \times 10^3}$ &47.2 &0.502 &94.9 \\
\textbf{COPYING }K=15 &1.68 &21  &22.6 &50.3 &96.7 &31 &$1.03\times10^3$ &50.6 &0.502 &94.9 \\
\bottomrule
\end{tabular}}
\label{tab:graph_stat}
\end{table}
\noindent \textbf{Experiment}: The qualitative similarity of two graphs can be measured in multiple ways. To obtain an insight into which method can generate graphs that are the closest to an original graph, we report various graph statistics on the sampled graphs and compare them to the statistics of the observed graphs for Cora, Polblogs, Bitcoin, and Amazon-photo datasets (homophilic graphs, used extensively for node classification) in Table~\ref{tab:graph_stat}. Details of all datasets used in this paper are included in~\ref{ap:data}. 
For uniformity, the directed graph of Bitcoin has been made undirected by casting every in/out edge as an undirected edge.

\noindent \textbf{Metrics}: For a graph $\mathcal{G}$,  $d(v)$ is the degree of node $v$ and $c_v$ denotes the class label of node $v$. We report the average and minimum degree, the proportion of cross-community edges: $\frac{1}{|\mathcal{E}|}\sum_{(i,j) \in \mathcal{E}} \mathbbm{1}[c_i \neq c_j] $ , the proportion of claws (`Y' structure): $\frac{1}{\binom{|\mathcal{E}|}{3}}\sum_{v \in \mathcal{V} }{ \binom{d(v)}{3} }$  and the relative entropy of the
edge distribution: $\frac{1}{\log N} \sum_{v \in \mathcal{V}} \frac{-d(v)}{|\mathcal{E}|}\log \big(\frac{d(v)}{|\mathcal{E}|} \big)$. By reporting various summary graph statistics, we can have a broad picture of the properties of the generated graphs.

In each case, 10
random trials of the model training are conducted and 10 random graphs
are sampled based on each trial. The results are thus averaged over 10 $\times$
10 = 100 sampled graphs. We observe that without the `calibration and correction', the baseline node embedding algorithms fail to generate representative graphs as the characteristics of the sampled graphs show large deviation from those of the observed graph. In particular, in `raw' versions, we obtain graph samples which are significantly denser, have higher proportion of cross-community links, have lower proportion of claws, and possess a flatter degree distribution compared to the observed graph. This shows that learning node embeddings while targeting a link prediction task does not necessarily result in a good generative model.

`Calibration' reduces this effect and leads to sparser graphs with
better cross-community structure. After the explicit `correction', the average degree of the sampled graphs is the same as that of the observed graph. However, the degree distribution still does not match with that of the observed graph, as evident from the maximum degree, as well as the relative entropy of the edge distribution. Moreover, the sample graphs have a much lower proportion of claws. This is not unexpected, because any model which is based on independence of the links cannot model such local structural dependencies observed in real world graphs. On the other hand, the samples from the proposed node copying model show a much better agreement with the observed graph in terms of all the statistics we consider. In particular, a much lower proportion of cross-community links in the graph samples of the copying model provides a rationale for its efficacy when the model is adopted for a node classification task. Overall, these tendencies can be observed when sampling is performed using a different number of nearest neighbors for the sampling distribution ($K=5,10,15$).  In this experiment, sampling from the $K=5$ nearest neighbors seems to be the best choice.

\noindent \textbf{Connections to other random graph models:} We further support this notion that the copying model preserves the structure of the observed graph with theoretical results. We show that if the observed graph itself is a sample from some parametric random graph model, then suitable copying procedures can ensure that the sampled graphs have the same marginal distribution. We prove this result for two popular random graph models, namely for the Stochastic Block Model in Theorem~\ref{th:sbm} and for the  Erd\H{o}s-R\'{e}nyi family in Theorem~\ref{th:er}.

\begin{theorem}
\label{th:sbm}
Let $\mathcal{G}_{obs}$ be a sample of a Stochastic Block Model (SBM) with $N$ nodes and $K$ communities and symmetric conditional probability matrix $\boldsymbol{\beta} \in \real^{K \times K}$. We use a node copying procedure between nodes of the same label, i.e., a node $j$ cannot be copied in place of node $i$, if the labels $c_i \neq c_j$. Then any sampled graph $\mathcal{G}_{sa}$ is marginally another realization from the same SBM. 
\end{theorem}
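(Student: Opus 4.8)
The plan is to condition on the perturbation vector $\boldsymbol{\zeta}$ and then exploit the one structural fact we need about the SBM: given the community assignment $\{c_v\}_{v=1}^N$, the law of a node's neighbourhood depends on that node only through its label. Concretely, with the labels fixed, the (directed) adjacency matrix has mutually independent entries $A_{\mathcal{G}_{obs},ij}\sim\mathrm{Bernoulli}(\boldsymbol{\beta}_{c_i c_j})$, so the $i$-th row of $A_{\mathcal{G}_{obs}}$ has law $\mu_{c_i}:=\bigotimes_{k=1}^{N}\mathrm{Bernoulli}(\boldsymbol{\beta}_{c_i c_k})$, which is the same for all nodes of a given community. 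The copying map writes $A_{\mathcal{G}_{sa}}=C_{\boldsymbol{\zeta}}A_{\mathcal{G}_{obs}}$, so row $i$ of $A_{\mathcal{G}_{sa}}$ is row $\zeta^i$ of $A_{\mathcal{G}_{obs}}$, and the theorem's constraint forces $c_{\zeta^i}=c_i$ almost surely; the whole argument hinges on this matching of labels.

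First I would fix a node $i$ and condition on $\{\zeta^i=m\}$ for an admissible $m$ (i.e.\ $c_m=c_i$): then row $i$ of $A_{\mathcal{G}_{sa}}$ equals row $m$ of $A_{\mathcal{G}_{obs}}$, which has law $\mu_{c_m}=\mu_{c_i}$ --- crucially the \emph{same} law for every admissible $m$. Since (as the hypothesis implicitly requires) the copying distribution is built from label information only and is therefore independent of the realised edges, marginalising over $\zeta^i$ leaves the law of the $i$-th neighbourhood of $\mathcal{G}_{sa}$ equal to $\mu_{c_i}$; running this over all $i$ shows that every neighbourhood, and hence every edge indicator $A_{\mathcal{G}_{sa},ij}$, carries exactly its SBM marginal $\mathrm{Bernoulli}(\boldsymbol{\beta}_{c_i c_j})$. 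For the undirected statement I would apply this to the directed intermediate graph and then symmetrise, with symmetry of $\boldsymbol{\beta}$ ensuring $\boldsymbol{\beta}_{c_i c_j}=\boldsymbol{\beta}_{c_j c_i}$ so the two orientations of each pair remain consistent.

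The step I expect to be the real obstacle is making precise --- and honest --- what ``marginally another realization'' buys us. The argument above gives agreement of all one-neighbourhood (hence one-edge) marginals, but not of the full joint law: if two same-community nodes happen to draw the same replacement node, an event of positive probability when $\boldsymbol{\zeta}$ has independent coordinates, their rows in $\mathcal{G}_{sa}$ coincide exactly, a dependence that a genuine SBM sample does not exhibit. So the statement should be read at the level of low-dimensional marginals, or strengthened to the full joint only under the extra hypothesis that $\boldsymbol{\zeta}$ restricts to a bijection on each community --- then $C_{\boldsymbol{\zeta}}$ is a community-preserving permutation and $C_{\boldsymbol{\zeta}}A_{\mathcal{G}_{obs}}$ is literally an SBM sample. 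A smaller bookkeeping nuisance is the diagonal: copying can carry a self-loop of $\zeta^i$ into the off-diagonal slot $(i,\zeta^i)$ and vice versa, so one should either let the SBM place self-loops with the same block probabilities or restrict the marginal claim to off-diagonal entries.
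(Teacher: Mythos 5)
Your computation of the one-edge (indeed one-row) marginals is correct and is essentially the same argument the paper uses for its first step: condition on $\zeta^i=v$, use the fact that within the SBM every row indexed by a node of community $c_i$ has the same conditional law, and marginalise over the admissible $v$. Where you diverge is on what happens beyond single edges, and here your scepticism is well placed. The paper does \emph{not} stop at one-edge marginals: it computes the joint law of two sampled edges (separating the cases $i_1\neq i_2$ and $i_1=i_2$), claims it factorises into $\beta_{c_{i_1}c_{j_1}}\beta_{c_{i_2}c_{j_2}}$, and then asserts that the same holds for arbitrary subsets of edges, i.e.\ that $\mathcal{G}_{sa}$ has exactly the SBM joint law. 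Your collision objection exposes a real flaw in that step: in the paper's Case~1 ($i_1\neq i_2$), the factorisation $p(A^{obs}_{vj_1}=1,A^{obs}_{uj_2}=1)=p(A^{obs}_{vj_1}=1)\,p(A^{obs}_{uj_2}=1)$ is applied to every pair $(v,u)$ in the double sum, but when $j_1=j_2$, $c_{i_1}=c_{i_2}$ and the draws collide ($u=v$), the two events are one and the same and the left-hand side equals $\beta_{c_v c_{j_1}}$, not its square. Exactly as you say, two same-community nodes that draw the same replacement get identical rows, so the pair $\bigl(A^{sa}_{i_1 j},A^{sa}_{i_2 j}\bigr)$ is positively correlated, and the full joint law is not that of an SBM unless $\boldsymbol{\zeta}$ is forced to act as a within-community bijection, or one reads the theorem (as you do) purely at the level of marginals of edge sets whose source rows remain distinct after replacement. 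Your diagonal caveat is also genuine and is silently ignored by the paper: if the SBM places no self-loops, then $A^{sa}_{i,\zeta^i}=A^{obs}_{\zeta^i,\zeta^i}=0$ deterministically, so even the single-edge marginal at the slot $(i,\zeta^i)$ is wrong unless self-loops are generated with the block probabilities. In short, your proposal proves the statement under the only reading for which it holds, and correctly identifies why the paper's stronger reading does not go through.
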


\begin{proof}
For $\mathcal{G}_{obs}$, we have $p(A_{ij}^{obs}=1|c_i, c_j ) = \beta_{c_i,c_j}$. Now, based on the copying strategy above, the probability of  observing an edge $(i,j)$ in $\mathcal{G}_{sa}$ is given by:
\begin{align}
p(A_{ij}^{sa}=1| c_i ,c_j) &=  \sum_{v=1}^N p(A_{vj}^{obs}=1|c_i, c_j, \zeta^i = v) p(\zeta^i = v|c_i, c_v) \,,\nonumber\\
&=  \sum_{v : c_i = c_v} p(A_{vj}^{obs}=1|c_i, c_j,\zeta^i = v)p(\zeta^i = v|c_i,c_v) \, \quad \text{ (since } p(\zeta^i = v|c_i \neq c_v) = 0),  \nonumber\\
&=  \beta_{c_i,c_j}  = p(A_{ij}^{obs}=1|c_i, c_j ), 
\end{align}
since, $p(A_{vj}^{obs}=1|c_i, c_j,\zeta^i = v) = \beta_{c_i,c_j} = p(A_{ij}^{obs}=1|c_i, c_j )$ for all $\{{v : c_i = c_v}\}$ and $\sum_{v : c_i = c_v} p(\zeta^i = v|c_i,c_v) = 1$. This is reliant on only copying within the same community, as $p(\zeta^i = v|c_i \neq c_v) = 0$.
Next we consider the joint distribution of two edges, i.e. $p(A_{i_1j_1}^{sa}=1, A_{i_2j_2}^{sa}=1 | c_{i_1} , c_{i_2}, c_{j_1}, c_{j_2})$. We need to consider the following two cases:\\
\noindent \textbf{Case 1: $i_1 \neq i_2$}
\begin{align}
&p(A_{i_1j_1}^{sa}=1, A_{i_2j_2}^{sa}=1 | c_{i_1} , c_{i_2}, c_{j_1}, c_{j_2}) \nonumber \\ &=\sum_{v=1}^N \sum_{u=1}^N p(\zeta^{i_1} = v, \zeta^{i_2} = u|c_{i_1},c_{i_2})
p(A_{vj_1}^{obs}=1, A_{uj_2}^{obs}=1|c_{i_1} c_{i_2}, c_{j_1}, c_{j_2}, \zeta^{i_1} = v, \zeta^{i_2} = u)\,,\nonumber\\
&= \sum_{\substack{v : \\c_{i_1} = c_{v}}} \sum_{\substack{ u : \\c_{i_2}= c_{u}}} p(\zeta^{i_1} = v,|c_{i_1}) p(\zeta^{i_2} = u,|c_{i_2} )  p(A_{vj_1}^{obs}=1|c_{i_1}, c_{j_1}, \zeta^{i_1} = v) p(A_{uj_2}^{obs}=1|c_{i_2}, c_{j_2}, \zeta^{i_2} = u)\,,\nonumber\\
&= \beta_{c_{i1}c_{j1}}\beta_{c_{i2}c_{j2}} = p(A_{i_1j_1}^{obs}=1, A_{i_2j_2}^{obs}=1 | c_{i_1}, c_{i_2}, c_{j_1}, c_{j_2})\,.
\end{align}
\noindent \textbf{Case 2: $i_1 = i_2$} 
\begin{align}
&p(A_{i_1j_1}^{sa}=1, A_{i_2j_2}^{sa}=1 | c_{i_1} = c_{i_2}, c_{j_1}, c_{j_2}) \, \nonumber \\ &=\sum_{v=1}^N p(\zeta^{i_1} = v|c_{i_1}) p(A_{vj_1}^{obs}=1, A_{vj_2}^{obs}=1|  c_{i_1}=c_{i_2}, c_{j_1}, c_{j_2}, \zeta^{i_1} = v)\,,\nonumber\\
&=\sum_{v : c_{i_1} = c_{v}}p(\zeta^{i_1} = v,|c_{i_1})  p(A_{vj_1}^{obs}=1|c_{i_1} , c_{j_1}, \zeta^{i_1} = v) p(A_{vj_2}^{obs}=1|c_{i_1}, c_{j_2}, \zeta^{i_1} = v)\,,\nonumber\\
&=\beta_{c_{i1}c_{j1}}\beta_{c_{i1}c_{j2}} = p(A_{i_1j_1}^{obs}=1, A_{i_1j_2}^{obs}=1 | c_{i_1}=c_{i_2}, c_{j_1} , c_{j_2})\,.
\end{align}
Here, the joint distribution of $\boldsymbol{\zeta}$ factorizes because of the independence of $\zeta^i$s. The joint conditional edge probability in $\mathcal{G}_{obs}$ is the product of individual conditional edge probabilities as $\mathcal{G}_{obs}$ is sampled from a SBM. Similarly, if we consider any arbitrary subset of the edges in $\mathcal{G}_{sa}$, we can show  that the joint distribution is mutually independent over the edges and is the same as that of $\mathcal{G}_{obs}$. This proves the theorem.
\end{proof}

\begin{theorem}
\label{th:er}
If $\mathcal{G}_{obs}$ is a sample of an Erd\H{o}s-R\'{e}nyi model with $N$ nodes and connection probability $\theta \in [0, 1])$, then for any arbitrary distribution of $\zeta$, $\mathcal{G}_{sa}$ is also a sample from the same model.
\end{theorem}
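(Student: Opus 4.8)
The plan is to recognise the Erd\H{o}s--R\'enyi model $G(N,\theta)$ as the degenerate case of the SBM of Theorem~\ref{th:sbm}: take a single community ($K=1$), or equivalently $K$ communities with $\boldsymbol{\beta}$ identically equal to $\theta$. Under this identification every node carries the same label, so the hypothesis of Theorem~\ref{th:sbm} that copying only occurs between same-label nodes is vacuously satisfied by \emph{every} copying vector $\boldsymbol{\zeta}$ with independent entries; there is simply no restriction to impose, which is exactly the ``arbitrary distribution of $\zeta$'' in the statement. Theorem~\ref{th:sbm} then yields immediately that $\mathcal{G}_{sa}$ has the same marginal law as $\mathcal{G}_{obs}$, i.e.\ it is again a $G(N,\theta)$ sample. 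So at the top level the statement should follow as a one-line corollary, and the bulk of any written proof is just re-deriving the needed computation in this simpler setting.

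If a self-contained argument is preferred, I would mirror the two-step structure of the proof of Theorem~\ref{th:sbm} with $\beta_{ab}\equiv\theta$. Step one: for a single putative edge, write $A_{ij}^{sa}=A_{\zeta^i j}^{obs}=(C_{\boldsymbol{\zeta}}A_{\mathcal{G}_{obs}})_{ij}$, condition on $\zeta^i=v$, use that $A_{\mathcal{G}_{obs}}$ is independent of $\boldsymbol{\zeta}$ and that $p(A_{vj}^{obs}=1)=\theta$ for every source row $v$, and sum against $\sum_v p(\zeta^i=v)=1$ to obtain $p(A_{ij}^{sa}=1)=\theta$. Step two: for an arbitrary finite collection of edge slots, condition on the whole vector $\boldsymbol{\zeta}$; then $A_{\mathcal{G}_{sa}}=C_{\boldsymbol{\zeta}}A_{\mathcal{G}_{obs}}$ is a deterministic row re-indexing of $A_{\mathcal{G}_{obs}}$, whose rows are mutually independent with i.i.d.\ $\mathrm{Bernoulli}(\theta)$ entries, and the mutual independence of the $\zeta^i$ lets one conclude the re-indexed entries are again i.i.d.\ $\mathrm{Bernoulli}(\theta)$; since the conditional law of $\mathcal{G}_{sa}$ given $\boldsymbol{\zeta}$ is then the same $G(N,\theta)$ law for every value of $\boldsymbol{\zeta}$, averaging over $\boldsymbol{\zeta}$ leaves it unchanged. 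This collapses ``Case~1'' and ``Case~2'' of the SBM proof into one, because all $\boldsymbol{\beta}$ entries coincide.

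The step I expect to be the real obstacle -- and the one already latent in the proof of Theorem~\ref{th:sbm} -- is the bookkeeping for \emph{collisions}, i.e.\ the event $\zeta^{i_1}=\zeta^{i_2}$ for distinct targets $i_1\neq i_2$ (and, relatedly, the diagonal convention for slots $v=j$). When $\zeta^{i_1}=\zeta^{i_2}$, rows $i_1$ and $i_2$ of $A_{\mathcal{G}_{sa}}$ are literally equal, so slots $(i_1,j)$ and $(i_2,j)$ that share a common target $j$ become perfectly correlated rather than independent, and the factorisation $p(A_{vj_1}^{obs}=1,A_{uj_2}^{obs}=1\mid\cdot)=p(A_{vj_1}^{obs}=1\mid\cdot)\,p(A_{uj_2}^{obs}=1\mid\cdot)$ invoked in Step two breaks down on the diagonal $v=u$, $j_1=j_2$. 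To keep the clean ``$\mathcal{G}_{sa}$ is a sample from the same model'' statement one has to address this explicitly: either read the conclusion as a statement about single-edge (and non-colliding) marginals, or restrict attention to copying vectors $\boldsymbol{\zeta}$ that are almost surely injective (e.g.\ a uniform random permutation), or quantify $p(\zeta^{i_1}=\zeta^{i_2})$ and show it is absorbed once $\boldsymbol{\zeta}$ is averaged out over the relevant families of slots. Everything else is routine specialisation of the computations already carried out for the SBM.
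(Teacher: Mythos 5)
Your self-contained argument follows essentially the same route as the paper's proof of Theorem~\ref{th:er}: the paper computes $p(A_{ij}^{sa}=1\mid\theta)=\sum_{k}p(\zeta^i=k\mid\theta)\,p(A_{kj}^{obs}=1\mid\theta,\zeta^i=k)=\theta$ and then asserts, in a single sentence and without calculation, that the joint distribution over any subset of edge slots is again mutually independent Bernoulli$(\theta)$. Your alternative derivation of the result as the $K=1$ (or constant-$\boldsymbol{\beta}$) corollary of Theorem~\ref{th:sbm} is a legitimate shortcut the paper does not take, though it inherits exactly what the SBM proof actually establishes.

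The point you flag as the ``real obstacle'' deserves emphasis: it is not merely a presentational difficulty but a genuine gap in the paper's own argument. If $i_1\neq i_2$, $j_1=j_2=j$, and $p(\zeta^{i_1}=\zeta^{i_2})>0$, then conditionally on a collision the slots $(i_1,j)$ and $(i_2,j)$ of $\mathcal{G}_{sa}$ are the \emph{same} random variable $A^{obs}_{vj}$, so $p(A^{sa}_{i_1j}=1,\,A^{sa}_{i_2j}=1)=\theta^2+\theta(1-\theta)\,p(\zeta^{i_1}=\zeta^{i_2})>\theta^2$ whenever $\theta\in(0,1)$. Hence for a truly arbitrary distribution of $\boldsymbol{\zeta}$ (e.g.\ $\zeta^{i_1}=\zeta^{i_2}$ deterministically) the full law of $\mathcal{G}_{sa}$ is \emph{not} that of the Erd\H{o}s--R\'{e}nyi model; only the single-edge marginals, and joints over families of slots that do not get mapped to a common observed slot, are preserved. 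The same unstated assumption $(v,j_1)\neq(u,j_2)$ underlies the factorization in Case~1 of the paper's proof of Theorem~\ref{th:sbm}. Your three suggested repairs --- reading the conclusion at the level of marginals, restricting to almost surely injective $\boldsymbol{\zeta}$, or explicitly quantifying the collision probability --- are exactly the right options; the paper adopts none of them, so on this point your proposal is more careful than the published proof.
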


\begin{proof}
For $\mathcal{G}_{obs}$, we have $p(A_{ij}^{obs}=1|\theta) = \theta$. Now, based on any arbitrary copying strategy, we have
\begin{align}
p(A_{ij}^{sa}=1| \theta) &= \sum_{k=1}^N p(\zeta^i = k|\theta) p(A_{kj}^{obs}=1|\theta, \zeta^i = k)\,,\nonumber\\
&= \theta = p(A_{ij}^{obs}=1|\theta)\,,
 \end{align}
since $p(A_{kj}^{obs}=1|\theta, \zeta^i = k) = p(A_{kj}^{obs}=1|\theta) = \theta$ for $1 \leq k \leq N$ and $\sum_{k=1}^{N} p(\zeta^i = k|\theta) = 1$. Similarly, if we consider any arbitrary subset of the edges in $\mathcal{G}_{sa}$, we can show that the joint distribution is mutually independent over the edges and is the same as that of $\mathcal{G}_{obs}$. This proves the theorem.
\end{proof}

Although generating samples that preserve graph structure can be useful, it is not the main purpose of our model. In the following three sections, we explain how  this model can be applied in node classification, in protection against adversarial attack and in a recommendation system.

\section{Application -- Node copying for semi-supervised node classification}
The node classification algorithms that we consider rely heavily on the homophily of the graph structure in predicting node labels. Nodes of the same label tend to be connected together more often than the nodes of different labels. If we define a notion of similarity which is based on the node labels, a node copying model can sample graphs which preserve the homophily property (see Theorem~\ref{th:sbm}). Those graph samples can then be integrated in a Bayesian framework to make better prediction.

\noindent \textbf{Problem Setting:} In the task of semi-supervised node classification, apart from the observed graph $\mathcal{G}_{obs}$, we have access to node features $\BX$ and the labels in the training set $\mathbf{Y_{\mathcal{L}}}$. So, $\mathcal{D} = (\BX, \mathbf{Y_{\mathcal{L}}})$. The goal is to infer the labels of the remaining nodes $\overline{\mathcal{L}} = \mathcal{V} \setminus \mathcal{L}$.

\noindent \textbf{Bayesian GCNs - Background and Extension:} A Bayesian framework for GCNs~\cite{zhang2019} (BGCN) views
the graph $\mathcal{G}$ and the GCN weights $\mathbf{W}$ as random quantities
and performs prediction by computing an expectation with respect to the
posterior distribution. In~\cite{zhang2019}, $\mathcal{G}_{obs}$ is viewed as a sample
realization from a {\em parametric} random graph model; the goal is the posterior inference of $p(\mathcal{G}|\mathcal{G}_{obs})$
marginalizing with respect to the random graph parameters. However, this
approach ignores any possible dependence of the graph $\mathcal{G}$ on
the features $\BX$ and the labels $\mathbf{Y_{\mathcal{L}}}$~\cite{ma2019}. 
Although subsequent works address this issue, they either use variational approximation~\cite{elinas2020} of the posterior, or rely on {\em maximum a posteriori} (MAP) estimation~\cite{pal2020}. Neither approach attempts to generate graph samples from the actual posterior. 

We propose a modified version of the BGCN. 
In the proposed BGCN, the conditional distribution of the
graph $\mathcal{G}$ is represented as
$p(\mathcal{G}|\mathcal{G}_{obs}, \BX, \mathbf{Y_{\mathcal{L}}})$.
This graph distribution can incorporate the information provided by the features $\BX$ and the training labels $\mathbf{Y_{\mathcal{L}}}$.
In the classification task, the posterior is defined over the GCN weights $\mathbf{W}$.  As in~\cite{gal2016}, we model the prior $p(\mathbf{W}) = \mathcal{N}(\mathbf{W}; \mathbf{0}, \mathbf{I})$. The likelihood $p(\mathbf{Y}_{\mathcal{L}}|\BX,\mathcal{G}, \mathbf{W})$ is modelled using a $K$-dimensional categorical distribution by applying softmax function at the last layer of the Bayesian GCN over the graph $\mathcal{G}$. The posterior distribution of $\mathbf{W}$ can be written as:
\begin{align}
p(\mathbf{W}|\BX, \mathbf{Y_{\mathcal{L}}},\mathcal{G}) &\propto
p(\mathbf{W})p(\mathbf{Y_{\mathcal{L}}}|\mathbf{W},\BX,\mathcal{G})\,.
\end{align}

For predicting labels for the unlabelled nodes, we only need to draw samples from $p(\mathbf{W}|\BX, \mathbf{Y_{\mathcal{L}}},\mathcal{G})$, rather than explicitly evaluate the probability, so the normalization constant  $1/p(\mathbf{Y}_{\mathcal{L}}|\BX, \mathcal{G})$ can be ignored. The posterior over $\mathbf{W}$ induces a predictive marginal posterior for the unknown node labels $\mathbf{Z}$ as follows:
\begin{align}
p(\BZ|\BX,\mathbf{Y_{\mathcal{L}}},\mathcal{G}_{obs}) 
=  \int  p(\BZ|\mathbf{W},\mathcal{G}_{obs},\BX) p(\mathbf{W}|\BX,\mathbf{Y_{\mathcal{L}}},\mathcal{G}) p(\mathcal{G}|\mathcal{G}_{obs}, \BX,\mathbf{Y_{\mathcal{L}}}) \,d\mathbf{W}\,d\mathcal{G} \label{eq:exact_posterior}\,.
\end{align}
The integral in equation~\eqref{eq:exact_posterior} cannot be computed in a closed form. Hence, Monte Carlo sampling is used for approximation:
\begin{align}
&p(\BZ|\BX,\mathbf{Y_{\mathcal{L}}},\mathcal{G}_{obs}) \approx \dfrac{1}{N_G S}\sum_{i=1}^{N_G}\sum_{s=1}^S p(\BZ|\mathbf{W}_{s,i},\mathcal{G}_{obs},\BX)\,.
\label{eq:MC_posterior}
\end{align}
Here, $N_G$ graphs $\mathcal{G}_{i}$ are
sampled from $p(\mathcal{G}|\mathcal{G}_{obs}, \BX,\mathbf{Y_{\mathcal{L}}})$ 
and subsequently for each $\mathcal{G}_{i}$,  $S$ weight matrices
$\mathbf{W}_{s,i}$ are sampled from the variational approximation of
$p(\mathbf{W}|\BX,\mathbf{Y_{\mathcal{L}}},\mathcal{G}_{i})$. 
Due to its simplicity, we use Monte Carlo dropout~\cite{gal2016} to generate the samples $\mathbf{W}_{s,i}$ from the variational approximation, although its generalization for GCNs in~\cite{hasanzadeh2020} could also be used.


\noindent \textbf{Node copying for BGCN:} The node copying model provides a way to specify  $p(\mathcal{G}|\mathcal{G}_{obs},\BX, \mathbf{Y_{\mathcal{L}}})$ and to generate samples from it. The proposed BGCN algorithm is presented in Algorithm 1. The inputs are the observed graph, the node features and the observed labels. The outputs are soft predictions for the unknown node labels. For the copying model, we need to specify    
$p(\zeta^j = m|\mathcal{G}_{obs}, \BX, \mathbf{Y_{\mathcal{L}}})$, because this defines the graph distribution.  In this setting, intuitively we believe that nodes with the \textit{same labels} are \textit{similar} and hence should be candidates for copying. For constructing the conditional distribution of $\boldsymbol{\zeta}$, we
employ a base classification algorithm using the observed graph
$\mathcal{G}_{obs}$, the features $\BX$ and the training labels
$\mathbf{Y_{\mathcal{L}}}$ to obtain predictive labels
$\hat{c}_{\ell} \in \{1,..., K\}$ for each node $\ell$ in the
graph (Step 3: Initialization). Then, for each node $j$, we assign a uniform probability for
$\zeta^j$ over all nodes with the same predictive label from the base
classifier as follows:
\begin{align}
p(\zeta^j = m|\mathcal{G}_{obs}, \BX, \mathbf{Y_{\mathcal{L}}}) = \begin{dcases} \frac{1}{|\mathcal{C}_k|}, \enspace \text{if } \hat{c}_j = \hat{c}_m = k\,, \text{ where }\mathcal{C}_k = \{\ell \mid \hat{c}_{\ell} = k\} \\
0, \enspace  \text{otherwise}\,.\end{dcases}
\label{eq:nc_zeta}
\end{align}
In our experiments, we use the GCN~\cite{kipf2017} as a base classifier to obtain $\{\hat{c}_{\ell}\}$.

\begin{algorithm}[ht]
\caption{Bayesian GCN with node copying}
\label{alg:bgcn_copying}
\begin{algorithmic}[1]
\STATE {\bfseries Input:}  $\mathcal{G}_{obs}$, $\BX$, $\mathbf{Y_{\mathcal{L}}}$
\STATE {\bfseries Output:}  $p(\BZ|\BX,\mathbf{Y_{\mathcal{L}}},\mathcal{G}_{obs})$
\STATE {\bfseries Initialization:} Train a base classifier to obtain $\hat{c}_{\ell}$ and form $\mathcal{C}_k$, $k = 1, 2, ... , K$.
\FOR{$i=1$ {\bfseries to} $N_G$}
\STATE  Sample graph $\mathcal{G}_{i} \sim p(\mathcal{G}|\mathcal{G}_{obs}, \BX, \mathbf{Y_{\mathcal{L}}})$ from node copying model defined by~\eqref{eq:nc_zeta} and~\eqref{eqn:model}.
\FOR{$s=1$ {\bfseries to} $S$}
\STATE Sample weights $\mathbf{W}_{s,i}$ using MC dropout by training a GCN over graph $\mathcal{G}_{i}$.
\ENDFOR
\ENDFOR
\STATE Approximate $p(\BZ|\BX,\mathbf{Y_{\mathcal{L}}},\mathcal{G}_{obs})$ using eq.~\eqref{eq:MC_posterior}.
\end{algorithmic}
\end{algorithm}

\noindent \textbf{Experiments:} We evaluate node classification on standard benchmark datasets, including citation datasets Cora, Citeseer and Pubmed from~\cite{sen2008} and Coauthor-CS from~\cite{shchur2018}. For the citation networks, we use the same hyperparameters as in~\cite{kipf2016} and for the Coauthor-CS dataset we use the largest connected component and the hyperparameters reported in~\cite{shchur2018}. We address three different scenarios, where we have access to 5, 10 or 20 labels per class. We conduct 50 trials; each trial corresponds to a random weight initialization and use of a random train/test split.

\begin{table}[htbp!]
\centering
\caption{Accuracy of semi-supervised node classification.}
\label{table:bgcnresult}
\setlength{\tabcolsep}{4pt}
\resizebox{\columnwidth}{!}{
\begin{tabular}{l|llll|llll}
\toprule 
\textbf{Algorithms} &  &\textbf{5 labels}        &\textbf{10 labels}         &\textbf{20 labels} &&\textbf{5 labels}        &\textbf{10 labels}         &\textbf{20 labels} \\ \midrule[0.25ex]
\textbf{GCN}    &{\multirow{5}{*}{\rotatebox[origin=c]{90}{Cora}}}         &70.0$\pm$3.7            &76.0$\pm$2.2     
&79.8$\pm$1.8  &{\multirow{5}{*}{\rotatebox[origin=c]{90}{Citeseer}}}  &58.5$\pm$4.7            &65.4$\pm$2.6              &67.8$\pm$2.3  \\
\textbf{MMSBM-BGCN}    &&\textbf{74.6$\pm$2.8}*   &77.5$\pm$2.6   &80.2$\pm$1.5  &  &63.0$\pm$4.8   &\textbf{69.9$\pm$2.3}*     &\textbf{71.1$\pm$1.8}*  \\
\textbf{DFNET-ATT}        &        &72.3$\pm$2.9&75.8 $\pm$1.7 &79.3$\pm$1.8 &  &60.5$\pm$1.2&63.2 $\pm$2.9 &66.3$\pm$1.7 \\
\textbf{SBM-GCN}         &  &46.0$\pm$19        &74.4$\pm$10   &\textbf{82.6$\pm$0.2}*  & 
&24.5$\pm$7.3    &43.3$\pm$12  &66.1$\pm$5.7           \\
\textbf{BGCN-Copy}  &  &73.8$\pm$2.7   &\textbf{77.6$\pm$2.6}  &80.3$\pm$1.6 & &\textbf{63.9$\pm$4.2}*   &68.5$\pm$2.3     &70.2$\pm$2.0  \\ \midrule[0.25ex]
\textbf{GCN}      &{\multirow{4}{*}{\rotatebox[origin=c]{90}{Pubmed}}}      &69.7$\pm$4.5           &73.9$\pm$3.4    &77.5$\pm$2.5   &{\multirow{4}{*}{\rotatebox[origin=c]{90}{Co.-CS}}} &90.7$\pm$1.4   &90.7$\pm$1.4 &\textbf{92.1$\pm$1.0 } \\
\textbf{MMSBM-BGCN}  &  &70.2$\pm$4.5  &73.3$\pm$3.1             &76.0$\pm$2.6  &&\textbf{91.0$\pm$1.0}           &\textbf{91.3$\pm$1.1}*    &91.6$\pm$1.0    \\
\textbf{SBM-GCN}       &    &59.0$\pm$10        &67.8$\pm$6.9  &74.6$\pm$4.5    &  &88.8$\pm$1.7           &89.8$\pm$1.7  &91.4$\pm$1.8  \\
\textbf{BGCN-Copy}   &&\textbf{71.0$\pm$4.2}*  &\textbf{74.6$\pm$3.3}*             &\textbf{77.5$\pm$2.4}    &  &90.5$\pm$1.4      &90.7$\pm$1.2   &91.6$\pm$1.0       \\ \bottomrule
\end{tabular}}
\label{tab:my_label}
\end{table}

We compare the proposed BGCN based on node copying (BGCN-Copy) with the BGCN based
on an MMSBM~\cite{zhang2019} (MMSBM-BGCN) and the SBM-GCN~\cite{ma2019} to highlight the usefulness of the
node copying model. We also include node classification baselines
GCN~\cite{kipf2017} and DFNET\cite{wijesinghe2019} (only for Cora and Citeseer datasets due to run-time considerations). The
average accuracies along with standard errors are reported in Table~\ref{table:bgcnresult}. For each setting, the highest average accuracy is written in bold and an asterisk (*) is used if the algorithm offers better performance compared to the second best algorithm that is statistically significant at the 5\% level using a Wilcoxon signed rank test. (The significance test results are reported in the same way in the two following sections).

\textbf{Data scarce setting}: As in~\cite{ma2019}, we consider another node classification experiment where all the edges connected to the test set nodes are removed from the graph. We also have access to only 5 training examples from each class. We use a Multi Layer Perceptron (MLP) as a non-graph baseline algorithm. We conduct 20 trials for each dataset and report the average accuracies along with the standard error in Table~\ref{tab:data_scarce}.

\begin{table}[htbp!]
\centering
\caption{Accuracy of semi-supervised node classification in data-scarce setting.}
\setlength{\tabcolsep}{4pt}
\begin{tabular}{lccccc}
\toprule
&\textbf{MLP} &\textbf{GCN} &\textbf{MMSBM-BGCN} &\textbf{SBM-GCN} &\textbf{BGCN-Copy} \\ \midrule
\textbf{Cora}        &39.7$\pm$3.7                                      &53.5$\pm$3.6                                       &54.7$\pm$4.6                                             &25.3$\pm$12.8                                         &\textbf{58.7$\pm$3.5}*                 \\ 
\textbf{Citeseer}  &40.2$\pm$3.6                                      &48.3$\pm$3.3                                       &47.6$\pm$2.5                                             &18.1$\pm$5.4                                          &\textbf{54.5$\pm$2.8}*                 \\ 
\textbf{Pubmed}      &59.2$\pm$3.4                                      &66.2$\pm$4.1                                       &67.1$\pm$3.9                                             &57.8$\pm$7.7                                          &\textbf{69.1$\pm$4.0}*                 \\ 
\textbf{Coauthor CS} &80.5$\pm$2.5                                      &85.8$\pm$2.3                                       &88.2$\pm$1.2                                             &\textbf{88.7$\pm$1.8    }          &87.4$\pm$1.6                                            \\  \bottomrule
\end{tabular}
\label{tab:data_scarce}
\end{table}
\noindent \textbf{Comparison of BGCN generative models with runtime analysis}:
Finally, we compare the node copying BGCN with BGCN variants that incorporate other graph generative
models for $p(\mathcal{G} | \mathcal{G}_{obs}, \mathcal{D})$, including the generative models considered in Section~\ref{sec:analysis}. Performance and runtime results on the Cora dataset are reported in Table~\ref{table:bgcn_cora}. Implementation details of the BGCN variants are provided in~\ref{ap:bgcndet}.

\begin{table}[htbp]
\centering
\caption{Accuracy of node classification for different size of labeled set and runtime (for a 20 labels per class trial) of various BGCNs on Cora dataset, averaged over 20 trials.}
\setlength{\tabcolsep}{4pt}
\resizebox{\columnwidth}{!}{
\begin{tabular}{lcccccc}
\toprule
\textbf{Gen. model} &\textbf{VGAE}  &\textbf{GRAPHITE} &\textbf{MMSBM} &\textbf{NETGAN} &\textbf{DGLFRM-B} &\textbf{BGCN-Copy}\\ \midrule
\textbf{5 labels} &68.3$\pm$3.6                                                   &69.0$\pm$3.1                                           &74.6$\pm$2.8                                        &71.0$\pm$2.4                                         &\textbf{75.1$\pm$2.3  }      &73.8$\pm$2.7                                            \\  
\textbf{10 labels} &74.0$\pm$2.0                                                   &74.3$\pm$2.0                                           &77.5$\pm$2.6                                        &77.0$\pm$2.7                                         &77.3$\pm$1.8                                           &\textbf{77.6$\pm$2.6}          \\ 
\textbf{20 labels}  &78.2$\pm$2.0                                                   &78.3$\pm$1.7                                           &80.2$\pm$1.5                                        &80.0$\pm$1.62                                        &78.8$\pm$1.6                                           &\textbf{80.3$\pm$1.6   }               \\  \midrule
\textbf{Exec. time (s)} &33                                                             &34                                                     &512                                                 &86573                                                &244                                                    &\textbf{30 }                                                     \\  \bottomrule
\end{tabular}}
\label{table:bgcn_cora}
\end{table}
\noindent \textbf{Discussion}: From Table~\ref{table:bgcnresult}, we observe that the proposed BGCN either outperforms or offers comparable performance to the competing techniques in most cases. In the data-scarce setting, models which can incorporate the uncertainty in the graph show better performance in general as they have some capability to mitigate the effects of missing edges and fewer training labels. The proposed BGCN-Copy has the capacity of adding new edges during training, as do MMSBM-BGCN and SBM-GCN. From Table~\ref{tab:data_scarce}, we observe that the proposed algorithm shows significant improvement compared to its competitors.

Table~\ref{table:bgcn_cora} highlights that the node copying
based BGCN provides much better results than VGAE or GRAPHITE and is
much faster compared to the other alternatives. 
The proposed copying approach generalizes the BGCN to settings where the parametric model does not fit well. 
It also reveals that our approach is faster than MMSBM-BGCN which makes it a better alternative to process larger datasets. The parametric inference for the MMSBM is challenging for larger graphs, whereas our approach scales linearly with the number of nodes.  

\section{Application -- Node copying for defense against adversarial attacks}
We present node classification in an adversarial setting as a second application. Generally, a topological attack corrupts the prediction of a node label by tampering with its neighborhood. Since the node copying model explicitly provides new neighborhoods to nodes, we can construct a defense algorithm based on sampled graphs in which the targeted node can possibly escape the effect of the attack.

\noindent \textbf{Problem Setting}:  We consider attacks in which unlabeled nodes $\mathcal{V}_{attacked} \subset \overline{\mathcal{L}}$ are subjected to an adversarial attack that can modify the neighborhood of the targeted nodes. The defense algorithm only has access to the resulting corrupted graph $\mathcal{G}_{attacked}$, and the goal is to recover classification accuracy at the targeted nodes $v \in \mathcal{V}_{attacked}$. The complete feature matrix $\BX$ and the training labels $\mathbf{Y_{\mathcal{L}}}$ of a (small) labeled set are assumed to be unperturbed. Following the  categorizations of graph attacks in the survey~\cite{lichao2018}, this specific adversarial setting belongs to the {\em edge-level, targeted, poisoning} category.

\noindent \textbf{Defense algorithm using node copying}:
 Our proposed defense procedure is summarized in Algorithm~\ref{alg:copying_attack}. We now detail the steps of the algorithm. We first train a GCN classification algorithm on the attacked graph (Step 3). Since our sole interest is in correcting the erroneous classifications at the nodes in $\mathcal{V}_{attacked}$, we do not need to sample entire graphs for this application. Instead, we
sample local neighbourhoods of the targeted nodes
$v \in \mathcal{V}_{attacked}$ via the node copying model
$p(\zeta^v| \mathcal{G}_{attacked}, \BX)$. We note that the classifications of the
attacked nodes are most likely incorrect. So, using the predicted classes to define similarities in the node copying model is not sensible as the targeted nodes might be more similar to nodes from a different class. Instead, we use an unsupervised representation of the nodes for sampling the replacement nodes. We train a node embedding algorithm using $(\mathcal{G}_{attacked}, \BX)$ to obtain the node representations $\{e_i\}_{i=1:N}$ (Step 4). A symmetric, pairwise distance matrix $D \in \real_{+}^{N \times N}$ is formed, where $D_{ij}$ denotes a distance between $e_i$ and $e_j$. (Step 5).  In our experiments, we use the Graph Variational Auto Encoder (VGAE)~\cite{kipf2017} as the embedding algorithm and compute pairwise Euclidean distances to form $D$ : $D_{i,j} = ||e_i-e_j ||_2$. For any targeted node, we sample the replacement node (Step 8) uniformly at random from the nodes which are close to the targeted node. Formally, we define:
\begin{align}\label{eq:app:zeta_dist}
p(\zeta^v = m|\mathcal{G}_{attacked}, \BX, \mathbf{Y_{\mathcal{L}}}) =\begin{dcases} \frac{1}{P}, \enspace \text{if } D_{v,m} = D_{v,(\ell)} \text{ for some } 1 \leq \ell \leq P\,\\
0, \enspace  \text{otherwise}.\end{dcases} 
\end{align}
Here $D_{i,(j)}$ is the $j$-th order statistic of $\{D_{i,\ell}\}_{\ell=1}^N$. For each corrupted
node, our approach replaces the classification obtained from the node classifier trained on the corrupted graph $\mathcal{G}_{attacked}$ by the ensemble of classifications of the same model with the sampled $\zeta^v$s copied in place of node $v$ (Step 9). We use the mean of the softmax probabilities as the ensemble aggregation function (Step 11).  In many graph-based models~\cite{kipf2016,velivckovic2018}, the node classification is primarily influenced by nodes within a few hops. As a result, a localized evaluation of the predictions at the targeted nodes can be computed efficiently.

\begin{algorithm}[htbp]
\caption{Error correction using node copying}
\label{alg:copying_attack}
\begin{algorithmic}[1]
\STATE {\bfseries Input:}  $\mathcal{G}_{attacked}$, $\BX$, $\mathbf{Y_{\mathcal{L}}}$, $\mathcal{V}_{attacked}$
\vspace{0.1cm}
\STATE {\bfseries Output:} $\widehat{\mathbf{Y}}_{attacked}^{Copying}$
\vspace{0.1cm}
\STATE Train a semi supervised node classification algorithm using $\mathcal{G}_{attacked}, \BX, $ $\mathbf{Y_{\mathcal{L}}}$ to learn model parameters $\textbf{W}$.

\STATE Train a node embedding algorithm using $\mathcal{G}_{attacked}, \BX$  to obtain embeddings $\{e_i\}_{i=1}^N$. 

\STATE Compute the pairwise distance matrix $D$, where, $D_{ij} = \lvert \lvert e_i -e_j\rvert \rvert_2$.

\FOR{$v \in \mathcal{V}_{attacked}$}
\FOR{$k = 1:N_G$}
\STATE Sample $\zeta^v_k \sim p(\zeta^v |\mathcal{G}_{attacked}, \BX, \mathbf{Y_{\mathcal{L}}})$ according to~\eqref{eq:app:zeta_dist}.

\STATE Copy node $\zeta^v_k$ in place of node $v$ and compute the prediction of the learned classifier (in Step 3) at node $v$,  $\bm{\hat{y}}_{v}^{(\zeta^v_k)}$ using the parameters $\mathbf{W}$.
\ENDFOR
\STATE Compute $\bm{\hat{y}}_{v}^{Copying} = \frac{1}{N_G}\sum_{k=1}^{N_G} \bm{\hat{y}}_{v}^{(\zeta^v_k)}$
\ENDFOR
\STATE  Form $\widehat{\mathbf{Y}}_{attacked}^{Copying} = \{\bm{\hat{y}}_{v}^{Copying}\}_{v \in \mathcal{V}_{attacked}}$
\end{algorithmic}
\end{algorithm}
We note that the node embeddings $\{e_i\}^N_{i=1}$ of $\mathcal{G}_{attacked}$, which are used to form  $p(\zeta^v |\mathcal{G}_{attacked}, \BX, \mathbf{Y_{\mathcal{L}}})$ are also affected by the topological attack. This can potentially degrade the effectiveness of the proposed defense mechanism. If the attack is believed to be too strong, we can use a similarity metric that ignores any topological information instead of relying on the embeddings of the nodes of the attacked graph. However, our experimental results in Table~\ref{tab:attack_result} demonstrates that even for a quite severe attack (where $75\%$ of the neighbors of the targeted nodes have been tampered with), the proposed defense strategy using node copying has impressive performance.
\begin{figure}[ht]
\centering
\includegraphics[trim={0 0 0 1em}, scale=0.325, clip]{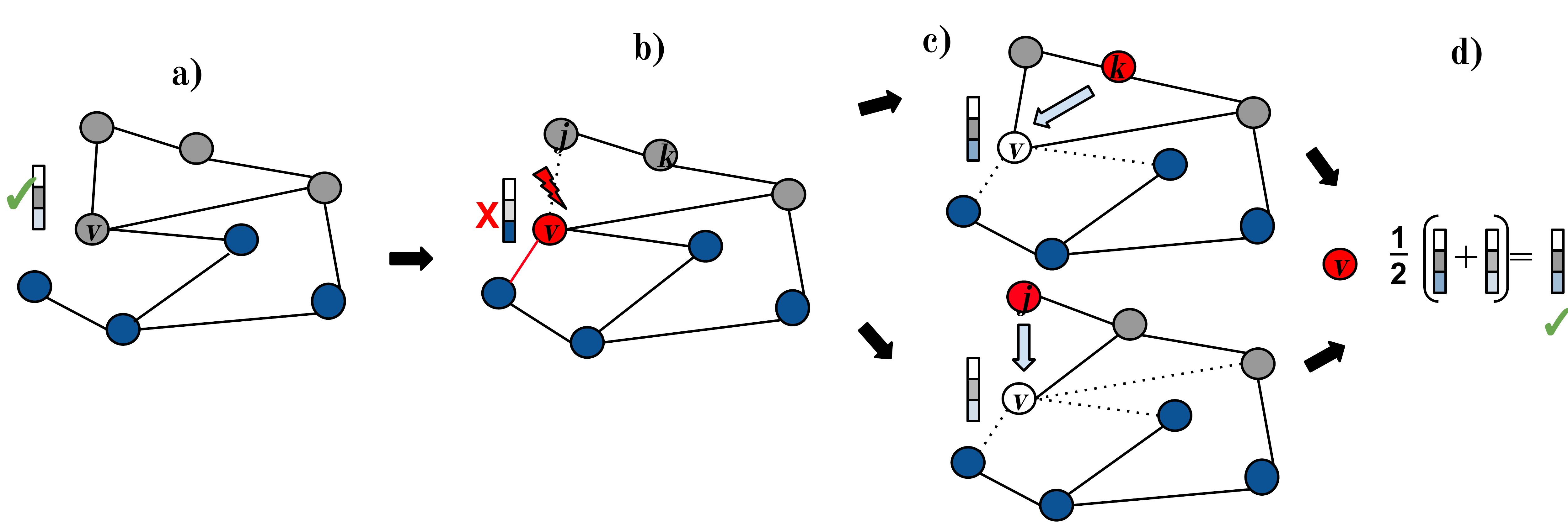}
\caption{Summary of the node copying procedure. \textbf{a)} In the absence of the attack, the softmax of node $v$ achieves the correct classification in $\mathcal{G}_{obs}$. \textbf{b)} Node $v$ is targeted by an attack and is now wrongly classified. \textbf{c)} We sample two replacement nodes for node $v$ : $\zeta^v=j$ and $\zeta^v=k$. The softmax vectors at node $v$ after nodes $j$ and $k$ are copied in place of node $v$ are denoted as  $\bm{\hat{y}}^{(j)}_{v}$ and $\bm{\hat{y}}^{(k)}_{v}$, respectively. \textbf{d)} The error for node $v$ is corrected by computing $\bm{\hat{y}}_{v}^{Copying} = \frac{1}{2}\big(\bm{\hat{y}}^{(j)}_{v} + \bm{\hat{y}}^{(k)}_{v} \big)$.}
\label{fig:node_copying}
\end{figure} 
Figure~\ref{fig:node_copying}(b) shows how a topological attack can lead to an incorrect classification for a targeted node. Figure~\ref{fig:node_copying}(c) provides two examples of the node copying operation to generate new graphs by copying some $\zeta^v$s in place of node $v$. Figure~\ref{fig:node_copying}(d) depicts how the softmax values derived from these generated graphs are combined to recover the original correct classification.

\noindent \textbf{Experiments:}~\label{sec:exp:att:dice}
Our proposed defense algorithm is evaluated against three graph adversarial attacks. {\em Targeted DICE} (Delete Internally Connect Externally) is the targeted version of the global attack from~\cite{waniek2018Dice}. This algorithm attacks a node by randomly disconnecting $\beta$ percent of its neighbors of the same class, and add the same number of edges to random nodes of a different class.  We present results with $\beta = 50\%$ and $\beta =75\%$. {\em Nettack} from~\cite{zugner2018} is another topological attack. We use the directed version of {\em Nettack} and set the number of perturbations to be equal to the degree of the attacked node. The last attack we consider is {\em FGA}~\cite{chen2018d}, for which we set the number of perturbations to 20. All remaining parameters are set to the values provided in the respective papers.

In order to illustrate the effectiveness of the procedure, we compare the accuracy at the attacked nodes for the proposed copying algorithm with a standard GCN and a {\em state-of-the-art} defense algorithm GCN-SVD~\cite{entezari2020}, for which we set the rank parameter $K=50$. We consider a setting where $\mathcal{V}_{trn}$ is formed with 20 labels per class. The attacked set $\mathcal{V}_{attacked}$ is simulated by randomly sampling 40 nodes, excluding $\mathcal{V}_{trn}$, and corrupting them with the attack.

\begin{table}[htbp]
\centering
\caption{Accuracy of defense algorithms at the attacked node averaged over 50 trials.} \label{tab:attack_result}
\setlength{\tabcolsep}{3pt}
\resizebox{\columnwidth}{!}{
\begin{tabular}{lcccc|cccc}
\toprule
\multicolumn{1}{l}{} &\multicolumn{4}{c}{\textbf{Cora}} &\multicolumn{4}{c}{\textbf{Citeseer}} \\ 
\midrule
\textbf{} &\textbf{DICE $50\%$} &\textbf{DICE $75\%$} &\textbf{Nettack} &\textbf{FGA} &\textbf{DICE $50\%$} &\textbf{DICE $75\%$} &\textbf{Nettack} &\textbf{FGA} \\ \midrule
\textbf{GCN} &53.4$\pm$8.9 &31.9$\pm$8.9 &13.6$\pm$9.6 &69.8$\pm$16 &45.5$\pm$7.9 &30.45$\pm$7.6 &11.8$\pm$5.1 &54.6$\pm$15 \\
\textbf{GCN-SVD} &51.1$\pm$9.8 &35.9$\pm$7.1 &\textbf{41.7$\pm$11 }&57.2$\pm$15 &48.4$\pm$7.7 &36.3$\pm$9.1 &\textbf{34.4$\pm$8.9 *}&39.4$\pm$17 \\
\textbf{Copying} &\textbf{57.5$\pm$9.9*} &\textbf{38.3$\pm$7.4*} &40.5$\pm$11 &\textbf{70.2$\pm$16 }&\textbf{50.1$\pm$9.1*} &\textbf{37.6$\pm$9.1} &31.6$\pm$7.7 &\textbf{55.8$\pm$12} \\ \bottomrule
\end{tabular}}
\end{table}

\noindent \textbf{Discussion}: From Table~\ref{tab:attack_result}, we see that the proposed
corrective mechanism based on node copying improves the classification
accuracy of the corrupted nodes. Interestingly, the proposed correction procedure does not involve
sampling of the entire graph or retraining of the model, rather we can perform computationally inexpensive, localized computations to improve accuracy at the targeted nodes. We observe that the  GCN-SVD approach outperforms the Copying algorithm for {\em Nettack}, which is expected as it is tailored to provide efficient defense for this specific attack, but performs significantly worse than a standard GCN against the {\em FGA} attack. In contrast, the proposed  Copying defense offers consistent improvement over  GCN across different attacks. 

\section{Application -- Node copying for recommender systems}\label{sec:app_rec}
As the last application, we consider the task of graph-based personalized item recommendation. An intuitive solution to this problem is to form the item recommendation of a user based on other users that have many items in common. If two users share multiple items in their interaction histories, we presume that one item purchased/liked by one of them could be recommended to the other. A node copying model can directly apply this approach by using a similarity metric between users that is proportional with the number of shared items. \\
\noindent \textbf{Problem Setting}:
Let $\mathcal{U}$ be the set of users and $\mathcal{I}$ be the set of items. $\mathcal{G}_{obs}$ is the partially observed bipartite graph
built from previous user-item interactions. The task is to infer
other unobserved interactions, which can be seen as a link prediction
problem~\cite{Wang2019NeuralGC,Ying2018GraphCN}. Alternatively, we can view this task as a ranking
problem~\cite{rendle2009}. For each user $u$, for an observed interaction
with item $i$ and a non-observed interaction with item $j$, we have
$i >_u j$ in the training set. If both $(u,i)$ and $(u,j)$ are
observed, we do not have a ranking. Using this interaction data
$\{>_u\}_{trn} = \{(u,i,j) : (u,i) \in \mathcal{G}_{obs}, (u,j) \notin
\mathcal{G}_{obs}\}$ for fitting a model, the task is to rank all
$(u,i,j)$ such that both $(u,i)$ and $(u,j) \notin \mathcal{G}_{obs}$. The sets of pairs
$\{(u,i) : (u,i) \in \mathcal{G}_{obs}\}$ and
$\{(u,j) : (u,j) \notin \mathcal{G}_{obs}\}$  are referred to as
the positive and negative pools of interactions respectively. We
denote the test set as
$\{>_u\}_{test} = \{(u,i,j) : (u,i) \notin \mathcal{G}_{obs}, (u,j)
\notin \mathcal{G}_{obs}\}$. We consider a Bayesian Personalized
Recommendation (BPR) framework~\cite{rendle2009} (details in~\ref{appendix:bpr}) along with the inference of the graph $\mathcal{G}$.

\noindent \textbf{Background}: Many existing graph-based deep learning recommender system models~\cite{Sun2019MultigraphCC,Wang2019NeuralGC,Ying2018GraphCN}
learn weights $\mathbf{W}$ to form an embedding $e_u(\mathbf{W}, \mathcal{G}_{obs})$ for
the $u$-th user and $e_i(\mathbf{W}, \mathcal{G}_{obs})$ for the $i$-th item
node. The ranking of any $(u,i,j)$ triple such that both $(u,i) \notin \mathcal{G}_{obs} \text{ and } (u,j) \notin \mathcal{G}_{obs}$ is specified as $p(i >_u j|\mathcal{G}_{obs}, \mathbf{W}) = \sigma(e_u \cdot e_i - e_u \cdot e_j)\,,$
where $\sigma(\cdot)$ is the sigmoid function and $\cdot$ denotes the inner product. Assuming a suitable prior on $\mathbf{W}$, the learning goal is to maximize the posterior of $\mathbf{W}$ on the training set. Here,
$\widehat{\mathbf{W}} = \displaystyle{\argmax_{\mathbf{W}}}\Hquad
p(\mathbf{W}|\{>_u\}_{trn},\mathcal{G}_{obs})$ is learned by maximizing the BPR
objective using Stochastic Gradient Descent (SGD). The pairwise ranking probabilities in the test set is then computed using $\widehat{\mathbf{W}}$.

\noindent \textbf{Ensemble BPR}: We summarize the proposed Ensemble BPR algorithm in Algorithm~\ref{alg:ebpr}. 
For the Ensemble BPR algorithm, we first obtain $\widehat{\mathbf{W}}$ as previously described (Step 3), then we evaluate the ranking by computing an expectation with respect to the random graph $\mathcal{G}$, sampled from the node copying model.
\begin{algorithm}[ht]
\caption{Ensemble BPR with node copying}
\label{alg:ebpr}
\begin{algorithmic}[1]
\STATE {\bfseries Input:}  $\mathcal{G}_{obs}$, $\{>_u\}_{trn}$
\STATE {\bfseries Output:}  $p(\{>_u\}_{test}|\{>_u\}_{trn},\mathcal{G}_{obs})$
\STATE  Obtain $\widehat{\mathbf{W}} = \displaystyle{\argmax_\mathbf{W}}\Hquad
p(\mathbf{W}|\{>_u\}_{trn},\mathcal{G}_{obs})$ by minimizing the BPR
loss.
\FOR{$i=1$ {\bfseries to} $N_G$}
\STATE Sample graph $\mathcal{G}_{i} \sim p(\mathcal{G}|\mathcal{G}_{obs}, \{>_u\}_{trn})$ using the node copying model defined by~\eqref{eq:zeta_rec}.
\ENDFOR
\STATE Approximate $p(\{>_u\}_{test}|\{>_u\}_{trn},\mathcal{G}_{obs})$ using eq.~\eqref{eq:mc_immediate_copy}.
\end{algorithmic}
\end{algorithm}

In this setting, we assign a non-zero conditional probability to only the class of bipartite graphs since $\mathcal{G}_{obs}$ is bipartite. This is achieved by considering a node copying scheme where only user nodes can be copied to a user node. We consider that users that share items are similar, so we use the Jaccard index $\rho$ between the sets of items for pairs of users to define the conditional probability distribution of $\boldsymbol{\zeta}$ as follows:
\begin{align}
p(\zeta^j=m|\mathcal{G}_{obs})= \begin{dcases}
\rho(j,m)/ \displaystyle{\sum_{i \in \mathcal{U}}}\rho(j, i) \,,  &\text{ if } j, m \in \mathcal{U}  \\
0 \,, &\text{ otherwise} 
\end{dcases}
\label{eq:zeta_rec}
\end{align}
We compute the pairwise ranking probabilities in the test set as follows:
\begin{align}
p(\{>_u\}_{test}|\{>_u\}_{trn},\mathcal{G}_{obs}) = \int p(\{>_u\}_{test}|\mathcal{G}, \mathbf{W}) p(\mathbf{W}|\{>_u\}_{trn},\mathcal{G}_{obs}) p(\mathcal{G}| \mathcal{G}_{obs}, \{>_u\}_{trn}) d\mathbf{W} d\mathcal{G} \,.\label{eq:int_immediate_copy}
\end{align}
We sample $N_G$ graphs $\mathcal{G}_{i}$s from $p(\mathcal{G}|\mathcal{G}_{obs}, \{>_u\}_{trn})$ using node copying (Step 5) and then form a Monte Carlo approximation of eq.~\eqref{eq:int_immediate_copy} as follows (Step 7):
\begin{align}
&p(\{>_u\}_{test}|\{>_u\}_{trn},\mathcal{G}_{obs}) \approx \frac{1}{N_G}\sum_{i=1}^{N_G} p(\{>_u\}_{test}|\mathcal{G}_{i}, \widehat{\mathbf{W}})\,,\label{eq:mc_immediate_copy}
\end{align}
Implementation of eq.~\eqref{eq:mc_immediate_copy} does not require retraining of the model since $\widehat{\mathbf{W}}$ is already obtained by minimizing the BPR loss; it only involves evaluation of a trained model on multiple sampled graphs to compute the average pairwise ranking probability for the test set.

\noindent \textbf{Sampled Graph BPR (SGBPR) --- training with sampled graphs:}

We consider another approach where we use the generated graphs during the
training process as well. The SGBPR algorithm is summarized in Algorithm~\ref{alg:sgbpr}.
\begin{algorithm}[ht]
\caption{Sampled Graph BPR with node copying}
\label{alg:sgbpr}
\begin{algorithmic}[1]
\STATE {\bfseries Input:}  $\mathcal{G}_{obs}$, $\{>_u\}_{trn}$
\STATE {\bfseries Output:}  $p(\{>_u\}_{test}|\{>_u\}_{trn},\mathcal{G}_{obs})$
\FOR{$i=1$ {\bfseries to} $N_G$}
\STATE Sample graph $\mathcal{G}_{i} \sim p(\mathcal{G}|\mathcal{G}_{obs}, \{>_u\}_{trn})$ using the node copying model defined by~\eqref{eq:zeta_rec}.
\ENDFOR
\STATE Compute $\widehat{A}_{\overline{\mathcal{G}}}$  using eq.~\eqref{eq:A_hat_G_bar} and $\widehat{\overline{\mathcal{G}}}_b$ (equivalently $A_{\widehat{\overline{\mathcal{G}}}_b}$) using $\widehat{A}_{\overline{\mathcal{G}}}$.
\STATE Compute $\widehat{f(\mathcal{G}_{obs})} =
\mathcal{G}_{obs}\cup\widehat{\overline{\mathcal{G}}}_b$ 
\STATE  Obtain $\widehat{\mathbf{W}}' = \displaystyle{\argmax_\mathbf{W}} \Hquad p(\mathbf{W}|\{>_u\}_{trn},\mathcal{G}_{obs},\widehat{f(\mathcal{G}_{obs})})$ by minimizing the BPR
loss.
\STATE Approximate $p(\{>_u\}_{test}|\{>_u\}_{trn},\mathcal{G}_{obs})$ using eq.~\eqref{eq:mc_recommender}.
\end{algorithmic}
\end{algorithm}

Our motivation is to remove some of the
potentially unobserved positive interactions in the training set from
the negative interaction pool. We rely on the node copying model to
sample graphs (Step 4), which potentially contain positive interactions between
user-item pairs, which are unobserved in $\mathcal{G}_{obs}$. The inference of graph $\mathcal{G}$ is carried out from the
copying model $p(\mathcal{G}| \mathcal{G}_{obs}, \{>_u\}_{trn})$ specified in eq.~\eqref{eq:zeta_rec}. 
We need to compute:
\begin{align}
p(\{>_u\}_{test}|\{>_u\}_{trn},\mathcal{G}_{obs}) = \int &p(\{>_u\}_{test}|\mathcal{G}, \mathbf{W}) p(\mathbf{W}|\{>_u\}_{trn},\mathcal{G}_{obs},f(\mathcal{G}_{obs})) p(\mathcal{G}| \mathcal{G}_{obs}, \{>_u\}_{trn}) d\mathbf{W} d\mathcal{G} \,.\label{eq:int_recommender} 
\end{align}
Here, $f(\mathcal{G}_{obs})$ is a function of the observed graph that
returns a graph that is used to control the negative pool. There is flexibility in the choice of this function, but we use
$f(\mathcal{G}_{obs}) =
\mathcal{G}_{obs}\cup\overline{\mathcal{G}}_b$. The union $\cup$
indicates that we take the union of the edge sets of the two graphs.
We define $\overline{\mathcal{G}} \triangleq
\mathbb{E}_{\mathcal{G}|\mathcal{G}_{obs}}[\mathcal{G}]$ as the
graph with adjacency matrix equal to the expectation of the adjacency matrix over the
generative graph distribution. The graph $\overline{\mathcal{G}}_b$ is
derived from this; it has a binary adjacency matrix derived by comparing the
adjacency matrix entries of $\overline{\mathcal{G}}$ to a small
positive threshold $b$. So, we have $A_{\overline{\mathcal{G}}_b} = \mathbbm{1}[A_{\overline{\mathcal{G}}}>b]\,.$
Due to analytical intractability, we approximate
$\mathbb{E}_{\mathcal{G}|\mathcal{G}_{obs}}[\mathcal{G}]$ using Monte Carlo. This amounts to approximating the adjacency matrix of $\overline{\mathcal{G}} = \mathbb{E}_{\mathcal{G}|\mathcal{G}_{obs}}[\mathcal{G}]$, whose $(s, w)$-th entry can be estimated as:
\begin{align}
\widehat{A}_{\overline{\mathcal{G}}}(s,w) = \frac{1}{N_G}\sum_{i=1}^{N_G}A_{\mathcal{G}_{i}}(s,w) \,.\label{eq:A_hat_G_bar}
\end{align}
Here, $\mathcal{G}_{i} \sim p(\mathcal{G}|
\mathcal{G}_{obs}, \{>_u\}_{trn})$ is sampled from the node copying model in Step 4. This allows us to construct an estimate $\widehat{f(\mathcal{G}_{obs})}$ by first computing an approximate $\widehat{\overline{\mathcal{G}}}_b$ (equivalently $A_{\widehat{\overline{\mathcal{G}}}_b}$) using $\widehat{A}_{\overline{\mathcal{G}}}$ (Step 6) and then using this $\widehat{\overline{\mathcal{G}}}_b$ in the definition of $f(\mathcal{G}_{obs})$ (Step 7). 

In specifying $p(\mathbf{W}|\{>_u\}_{trn},\mathcal{G}_{obs},f(\mathcal{G}_{obs}))$,
we effectively aim to reduce the training set by eliminating
unobserved edges from the negative pool. We have $\{>_u\}_{trn} =
D_{S}$. We now introduce $D'_{S} = D_S\setminus \{(u,i,j): (u,i) \in \mathcal{G}_{obs}, (u,j)\in
\overline{\mathcal{G}}_b\}$, and set
$p(\mathbf{W}|\{>_u\}_{trn},\mathcal{G}_{obs},f(\mathcal{G}_{obs})) = p(\mathbf{W}|D'_{S})$, which amounts to training the model based on positive and negative interactions in $D'_{S}$. Using $\widehat{\mathbf{W}}' = \displaystyle{\argmax_\mathbf{W}} \Hquad p(\mathbf{W}|\{>_u\}_{trn},\mathcal{G}_{obs},\widehat{f(\mathcal{G}_{obs})})$ (Step 8), the posterior probability of $\{>_u\}_{test}$ in eq.~\eqref{eq:int_recommender} is then approximated as:
\begin{align}
p(\{>_u\}_{test}|\{>_u\}_{trn},\mathcal{G}_{obs}) &\approx \frac{1}{N_G}\sum_{i=1}^{N_G} p(\{>_u\}_{test}|\mathcal{G}_{i}, \widehat{\mathbf{W}}')\, \label{eq:mc_recommender}
\end{align}
\noindent \textbf{Experiments:}  For the recommender system, we use two datasets (ML100k and Amazon-CDs, with details in Table~\ref{table:rec_datasets} in~\ref{ap:data}) that we preprocess by retaining users and items that have a specified minimum number of edges set by a threshold. We create training, validation and test sets by splitting the data $70/10/20 \%$. We generate random splits for each of the 10 trials and report average Recall and Normalized Discounted Cumulative Gain (NDCG) at 10 and 20.  \textit{Recall@k} measures the proportion of the true (preferred) items from the top-$k$ recommendation. For a user $u\in\mathcal{U}$, the algorithm recommends an ordered set (in descending order of preference) of top-$k$ items $ I_k(u)=  \{i_n\}^k_{n=1}  \subset \mathcal{I}$. There is a set of true preferred items for a user $\mathcal{I}_u^+$ and the number of true positive is $|\mathcal{I}_u^+\cap I_k(u)|$, so the recall is computed as follows: $\mathrm{Recall@k}=\frac{|\mathcal{I}_u^+\cap I_k(u)|}{|\mathcal{I}_u^+|}$. Normalized Discounted Cumulative Gain (NDCG)~\cite{Jrvelin2000IREM} computes a score for recommendation $I_k(u)$ which emphasizes higher-ranked true positives.  $D_k(n)=\mathbbm{1}[i_n \in \mathcal{I}_u^+]/\log_2(n+1)$ accounts for a relevancy score. $\mathrm{NDCG}@k=\frac{\mathrm{DCG_k}}{\mathrm{IDCG_k}}=\frac{\sum_{i_n \in I_k(u)}D_k(n)}{\sum_{i_n \in \mathcal{I}_{u,k}^+}D_k(n)}$, where $\mathcal{I}_{u,k}^+$ is the ordered set of top-$k$ true preferred items in descending order of preference. We use the embedding model (MGCCF) presented in~\cite{sun2019a}, which achieves {\em state-of-the-art} performance for this task.

\begin{table}[htbp]
\centering
\caption{Average recall and NDCG for recommender system experiment.}
\setlength{\tabcolsep}{4pt}
\resizebox{\columnwidth}{!}{
\begin{tabular}{lcccc|cccc}
\midrule[0.25ex]
&&\textbf{MGCCF}       &\textbf{EBPR} &\textbf{SGBPR}&&\textbf{MGCCF}  &\textbf{EBPR} &\textbf{SGBPR}  \\
\midrule
\textbf{Recall@10} &{\multirow{4}{*}{\rotatebox[origin=c]{90}{ML100k}}}   &19.91$\pm$0.20 &20.15$\pm$0.07*   &\textbf{20.96$\pm$0.23*} &{\multirow{4}{*}{\rotatebox[origin=c]{90}{Ama.-CDs}}} &13.28$\pm$0.05  &13.21$\pm$0.01* &\textbf{13.83$\pm$0.19*}   \\
\textbf{Recall@20} &&31.27$\pm$0.21 &31.56$\pm$0.07*   &\textbf{32.45$\pm$0.34*} &&20.26$\pm$0.05  &20.37$\pm$0.02*   &\textbf{20.95$\pm$0.22*} \\
\textbf{NDCG@10}  &&26.50$\pm$0.20 &26.86$\pm$0.07*   &\textbf{27.66$\pm$0.24*} &&14.86$\pm$0.04  &14.78$\pm$0.01*   &\textbf{15.31$\pm$0.20*}  \\
\textbf{NDCG@20}   &&31.90$\pm$0.14 &32.23$\pm$0.06*   &\textbf{33.14$\pm$0.31*}  &&18.63$\pm$0.04  &18.65$\pm$0.02   &\textbf{19.20$\pm$0.20*}   \\
\midrule[0.25ex]
\end{tabular}}
\label{table:rec_result}
\end{table}

\noindent \textbf{Discussion}: From Table~\ref{table:rec_result}, we observe that for the recommendation task,
adaptation of the copying model results in a simple intuitive
`EBPR' algorithm which improves recall over the baseline and does not involve any retraining. Use of the sampled graphs to train the model offers significantly better performance.

\section{Conclusion}

We present a novel generative model for graphs called node
copying. The proposed model is based on the idea that neighbourhoods of similar nodes in a graph are similar and can be swapped. It is flexible and can incorporate many existing graph-based learning techniques for identifying node
similarity. Sampling of graphs from this model is simple. The
sampled graphs preserve important structural properties. We have
demonstrated that the use of the model can improve performance for a variety of downstream learning tasks. Future work will investigate developing more general measures of similarity among the nodes of a graph and incorporating them in the copying model, and exploring potential extensions of our theoretical results to more general graphon models.

\appendices
\section{Description of the datasets}\label{ap:data}
\begin{table}[H]
\centering
\caption{Statistics of evaluation datasets for node classification and generative graph models. }
\setlength{\tabcolsep}{4pt}
\resizebox{\columnwidth}{!}{
\begin{tabular}{lccccccc}
\toprule
\textbf{Dataset}      &\textbf{Cora} &\textbf{Citeseer} &\textbf{Pubmed} &\textbf{Coauthor CS} &\textbf{Amazon-Photo} &\textbf{Bitcoin} &\textbf{Polblogs} \\ 
\midrule
\textbf{No. Classes}   &7                              &6                                  &3                                &15                                    &8                                      &2                                 &2                                  \\ 
\textbf{No. Features}  &1,433                          &3,703                              &500                              &6805                                  &745                                    &166                               &N/A                                \\ 
\textbf{No. Nodes}     &2,485                          &2110                               &19,717                           &18,333                                &7,487                                 \ &472                               &1222                               \\ 
\textbf{Edge Density} &0.04\%                         &0.04\%                             &0.01\%                           &0.01\%                                &0.11\%                                 &0.11\%                            &0.56\%                            \\
\bottomrule
\end{tabular}
}
\label{table:dataset_statistics}
\end{table}

\begin{table}[H]
\caption{Statistics of evaluation datasets for the recommender system experiments.}
\centering
\setlength{\tabcolsep}{4pt}
\begin{tabular}{lccccc}
\midrule[0.25ex]
\textbf{Dataset}    &\textbf{Threshold} &\textbf{ \# Users} &\textbf{\# Items} &\textbf{\# Interactions} &\textbf{Density}   \\  \midrule[0.25ex]
\textbf{ML100k}     &10        &897     &823     &52,857         &7.15\%  \\
\textbf{Amazon-CDs} &30        &5,479   &2,605   &137,313        &0.96\%  \\
\midrule[0.25ex]
\end{tabular}
\label{table:rec_datasets}
\end{table} 

We conduct experiments on seven datasets. In the citation datasets (Cora~\cite{sen2008}, Citeseer~\cite{sen2008},  and Pubmed~\cite{namata2012}), nodes represent research papers and the task is to classify them into topics. The graph is built by adding an undirected edge when one paper cites another and features are derived from the keywords of the document. Coauthor CS is a coauthorship graph where each node is an author and two authors are connected if they have coauthored a paper. The node features represent keywords from each author’s papers and the node labels are the active area of research of the authors. Amazon-Photo~\cite{shchur2018} is a portion of the Amazon co-purchase dataset graph from~\cite{mcauley2015}. In this case nodes are products, the features are based on the reviews, and the label is the product type. Items often bought together are linked in the graph. The bitcoin dataset is a transactional network snapshot taken from the dynamic graphs provided in the Elliptic Bitcoin Dataset~\cite{weber2019anti_bitcoin}. In this dataset, nodes represent transactions and edges represent a directed bitcoin flow. Each transaction is labeled as being either fraudulent or legitimate. Polblogs~\cite{adamic2005} dataset is a political blogs network. 
\section{Details of the generative models used for BGCN}\label{ap:bgcndet}
For the node classification experiments comparing different candidate generative models, we use the following hyperparameters and test settings. All of the choices are inherited from the respective original papers. For VGAE~\cite{kipf2016} and GRAPHITE~\cite{grover2019}, we use a two layer GCN with 32 dimensional hidden layer and 16 dimensional encoding. The model is trained for 200 epochs with learning rate 0.01. In DGLFRM-B~\cite{metha2019}, we set $\alpha=50$ and $K=100$.

The encoder network has two nonlinear GCN layers with dimensions 64 and 100. The decoder network has two layers with dimension 32 and 16. The model is trained for 1000 epochs with a learning rate of 0.01 and dropout rate 0.5. All models are trained on 100\% of the dataset (no test edges are held out, because the task of interest is not link prediction). For the MMSBM, we use the same settings that are reported in~\cite{zhang2017}: $\eta = 1, \alpha = 1, \rho = 0.001$. The mini-batch size for SGD is $n =500$ and the hyperparameters related to the decaying step size are  $\epsilon_0 = 1, \tau = 1024$ and $\kappa = 0.5$. For the NETGAN, we set the stopping parameter EO at 50\% using a validation set of 15\% of the links. Due to computation limitations, we used random walks of 10K steps to generate the graphs. The remaining hyperparameters are set to the values reported in~\cite{bojchevski2018}: the generator and the discriminator are respectively set to 40 and 30 layers. The noise is generated by a 16-dimensional multivariate Gaussian distribution. We use the Adam optimizer with a learning rate of 0.0003 with $\lambda_{L_2}=10^{-6}$. The length of the random walks is set to 16. For more details, the complete list of hyperparameters can be found in section H of the supplementary material of~\cite{bojchevski2018}.

\section{BPR for recommendation}\label{appendix:bpr}
In~\cite{rendle2009}, Rendle et al.\ introduce Bayesian Personalized Ranking (BPR) framework
for recommendation systems. Following the notations in Section~\ref{sec:app_rec}, we denote the set of items that are neighbours in the observed
graph for user $u$ as
$\mathcal{I}_{u}^{+}:=\{i \in \mathcal{I}:(u, i) \in \mathcal{G}_{obs}\}$. The training set
can then be written as $D_{S}:=\left\{(u, i, j) | i \in \mathcal{I}_{u}^{+} \wedge j \in \mathcal{I} \setminus \mathcal{I}_{u}^{+}\right\}\,$.\\
In other words, the training set is all triples $(u,i,j)$ such that
user $u$ interacted with $i$ but did not interact with $j$. 
The test set, denoted $\overline{D}_{S}$ comprises all 
triples $(u,i,j)$ such that neither edge $(u,i)$ nor $(u,j)$ appears
in $\mathcal{G}_{obs}$. The goal of the recommender system is to generate a total ranking
$>_u$ of all items for each user $u$.   The relation $i >_u j$ specifies
that user $u$ prefers item $i$ to item $j$. 

In the Bayesian personalized ranking framework
of~\cite{rendle2009}, our task is to
maximize: $
p\left(\mathbf{\Theta} |\{>_{u}\}_{D_{S}}\right) \propto p\left(\{>_{u}\}_{D_{S}} | \mathbf{\Theta}\right) p(\mathbf{\Theta})\,.$
Here $\mathbf{\Theta}$ are the parameters of the model, and $\{>_{u}\}_{D_{S}}$
are the observed preferences in the training data. 
We aim to identify
the parameters $\mathbf{\Theta}$ that maximize this posterior over all users
and all pairs of items. If users are assumed to act independently, then we can write:
\begin{align}
p\left(\{>_{u}\}_{D_{S}} |  \mathbf{\Theta} \right)=\prod_{(u, i, j) \in D_{S}} p\left(i>_{u} j | \mathbf{\Theta}\right)
\end{align}
We define the probability that a user prefers item $i$ over $j$ as
$p\left(i>_{u} j | \mathbf{\Theta}
\right) :=\sigma\left(\hat{x}_{uij}(\mathbf{\Theta})\right).$
Here $\hat{x}_{u i j}$ is a function of the model parameters $\mathbf{\Theta}$ and the observed graph for each triple $(u,i,j)$. In our case, we use the difference between the dot products of the user and item embeddings, so $\hat{x}_{u i j}(\mathbf{\Theta}) = e_u(\mathbf{\Theta})\boldsymbol{\cdot} e_i(\mathbf{\Theta}) - e_u(\mathbf{\Theta}) \boldsymbol{\cdot} e_j(\mathbf{\Theta})$. If we adopt a normal distribution as the prior for $p(\mathbf{\Theta})$ then we can formulate the optimization objective as:
{ 
\begin{align}
\mathrm{BPR}{-}\mathrm{OPT} &:=\ln p\left(\mathbf{\Theta} |\{>_{u}\}_{D_{S}}\right) 
=\sum_{(u, i, j) \in D_{S}} \ln \sigma\left(\hat{x}_{u i j}\right)-\lambda_{\mathbf{\Theta}}||\mathbf{\Theta}||^{2}
\label{eq:BPROPT}
\end{align}}
We maximize this via stochastic gradient descent by repeatedly
drawing triples $(u,i,j)$ randomly from the training set and updating
the model parameters $\mathbf{\Theta}$.

\bibliographystyle{IEEEtran}
\bibliography{references}
\end{document}